
\documentclass{article}

\usepackage{microtype}
\usepackage{graphicx}
\usepackage{subcaption}
\usepackage{booktabs} 

\usepackage{hyperref}




\usepackage[accepted]{icml2026}
\usepackage{algorithm}
\usepackage{algorithmic}

\usepackage{amsmath}
\usepackage{amssymb}
\usepackage{mathtools}
\usepackage{amsthm}

\usepackage[capitalize,noabbrev]{cleveref}

\theoremstyle{plain}
\newtheorem{theorem}{Theorem}[section]
\newtheorem{proposition}[theorem]{Proposition}
\newtheorem{lemma}[theorem]{Lemma}
\newtheorem{corollary}[theorem]{Corollary}
\theoremstyle{definition}

\theoremstyle{remark}
\newtheorem{remark}[theorem]{Remark}


\usepackage[textsize=tiny]{todonotes}

\icmltitlerunning{AGZO: Activation-Guided Zeroth-Order Optimization for LLM Fine-Tuning}

\begin{document}

\twocolumn[
  \icmltitle{AGZO: Activation-Guided Zeroth-Order Optimization for LLM Fine-Tuning}



  \icmlsetsymbol{equal}{*}

  \begin{icmlauthorlist}
    \icmlauthor{Wei Lin}{cuhkcse}
    \icmlauthor{Yining Jiang}{xmu}
    \icmlauthor{Qingyu Song}{xmu}
    \icmlauthor{Qiao Xiang}{xmu}
    \icmlauthor{Hong Xu}{cuhkcse}
  \end{icmlauthorlist}

  \icmlaffiliation{cuhkcse}{Department of Computer Science and Engineering, The Chinese University of Hong Kong, Hong Kong}
  \icmlaffiliation{xmu}{Xiamen University, China}

  \icmlcorrespondingauthor{Qingyu Song}{simmonssong96@gmail.com}

  \icmlkeywords{Machine Learning, ICML}

  \vskip 0.3in
]



\printAffiliationsAndNotice{}  

\begin{abstract}
Zeroth-Order (ZO) optimization has emerged as a promising solution for fine-tuning LLMs under strict memory constraints, as it avoids the prohibitive memory cost of storing activations for backpropagation. However, existing ZO methods typically employ isotropic perturbations, neglecting the rich structural information available during the forward pass. In this paper, we identify a crucial link between gradient formation and activation structure: the gradient of a linear layer is confined to the subspace spanned by its input activations. Leveraging this insight, we propose Activation-Guided Zeroth-Order optimization (AGZO). Unlike prior methods, AGZO extracts a compact, activation-informed subspace on the fly during the forward pass and restricts perturbations to this low-rank subspace. We provide a theoretical framework showing that AGZO optimizes a subspace-smoothed objective and provably yields update directions with higher cosine similarity to the true gradient than isotropic baselines. Empirically, we evaluate AGZO on Qwen3 and Pangu models across various benchmarks. AGZO consistently outperforms state-of-the-art ZO baselines and significantly narrows the performance gap with first-order fine-tuning, while maintaining almost the same peak memory footprint as other ZO methods.
\end{abstract}

\section{Introduction}
\label{sec:intro}

Large language models (LLMs) are increasingly adapted to downstream tasks via fine-tuning, but in many practical settings---especially outside large-scale compute clusters---fine-tuning is primarily constrained by GPU memory~\citep{hu2022lora, ouyang2022training}. A key reason is that backpropagation requires storing forward activations (and related intermediate tensors), which can dominate the peak memory footprint at large sequence lengths and batch sizes \citep{chen2016training, rajbhandari2020zero}.
Zeroth-order (ZO) optimization provides an appealing alternative in such memory-limited regimes. ZO methods bypass backpropagation by updating parameters using only function evaluations, typically employing randomized finite-difference estimators to approximate the gradient \citep{nesterov2017random, ghadimi2013stochastic, duchi2015optimal}.
Specifically, MeZO adapts two-point randomized finite differences to LLM fine-tuning~\citep{malladi2023fine}. By employing in-place parameter perturbations and regenerating noise from random seeds, it achieves a peak memory footprint comparable to inference alone. This line of work demonstrates that ZO methods make end-to-end fine-tuning feasible under stricter hardware constraints, substantially reducing training memory while maintaining acceptable performance levels \citep{sun2022black, chen2023deepzero, zhang2024revisiting}.

Despite this progress, existing ZO fine-tuning baselines all follow a \emph{black-box perturbation} paradigm: perturbations are sampled from data-independent Gaussian distributions defined solely by parameter dimensions. MeZO uses isotropic full-space perturbations \citep{malladi2023fine}, and recent variant LOZO explores low-rank perturbations motivated by spectral properties of gradients \citep{chen2025enhancing}. Across these approaches, the perturbation distribution is typically independent of the internal representations produced by the current forward pass, which overlooks the structural information exposed by the forward evaluation that is intrinsically linked to the true gradient direction.

Motivated by a simple observation that the weight gradient on a mini-batch is determined by the upstream signals flowing into the layer and the activations produced in the forward pass, we demonstrate that the gradient directions of a linear layer are confined to the subspace spanned by the mini-batch activations, rather than being arbitrary in the full parameter space. 
Moreover, based on the empirical and theoretical insights that adaptation during fine-tuning is effectively low-dimensional and often admits low-rank structure ~\citep{aghajanyan2021intrinsic, li2018measuring, hu2022lora, hao2024flora}, we propose a simple principle for ZO fine-tuning: instead of perturbing weights in unconstrained random directions, one should concentrate perturbations within an activation-informed low-dimensional subspace revealed by the forward pass.

Guided by this principle, we propose Activation-Guided Zeroth-Order optimization (AGZO). AGZO samples low-rank, activation-guided perturbations constrained to the subspace, and uses them to form the ZO update. For nonlinear trainable layers, AGZO falls back to standard Gaussian perturbations to preserve general applicability across architectures.

Moreover, AGZO utilizes several strategies to reduce the memory consumption. First, we conduct a lightweight power iteration process in subspace construction~\citep{golub2013matrix,miyato2018spectral}. Second, AGZO keeps compact subspace information and releases in-memory cached activation values immediately after subspace extraction, thereby maintaining the memory advantages of ZO fine-tuning. Compared with existing perturbation baselines, AGZO exploits per-iteration activation structure to construct more informative perturbations, improving update quality while retaining the same memory-efficient character.

We theoretically evaluate the efficacy of our proposed activation-guidance methodology. By analyzing the alignment between the estimated and true gradients, given that most gradient energy lies in the leading singular directions of the activation matrix~\citep{gur2018gradient, papyan2020traces}, we prove that AGZO achieves a larger expected cosine similarity to the true gradient than the full-space random perturbation baselines. This result formalizes the intuition that activation-informed subspaces focus the optimization on directions carrying meaningful gradient signals, thereby enhancing the effectiveness of each update step.



We evaluate AGZO on Qwen3~\citep{yang2025qwen3} and Pangu \citep{chen2025pangu} models under practical GPU memory constraints. AGZO consistently outperforms MeZO and LOZO on various downstream benchmarks, narrowing the gap to first-order fine-tuning. We further support our motivation and theory by directly measuring directional fidelity, where AGZO achieves consistently higher cosine similarity to the true gradients than prior ZO baselines. Finally, we compare peak GPU memory usage by sweeping across sequence lengths and batch sizes, and show that AGZO matches the memory profile of other forward-only ZO baselines while remaining far below first-order training.

The primary contributions of this work are as follows:
\begin{itemize}
  \item We propose AGZO, a zeroth-order fine-tuning method that extracts compact activation subspaces on the fly and uses them to construct low-rank, activation-guided perturbations. We identify and formalize a fundamental structural link between gradients and activations in linear layers and empirically demonstrate that, during LLM fine-tuning, the gradient signal concentrates in a low-dimensional subspace revealed by the forward-pass activations.

  \item We theoretically demonstrate that our proposed activation guidance method improves ZO optimization. We show that AGZO can be viewed as optimizing a subspace-smoothed objective, and its update directions are provably more aligned with the true gradient than random perturbation methods under activation spectral concentration.

  \item We conduct experiments on various models that jointly demonstrate (i) consistently stronger gradient alignment with the true backpropagation direction, (ii) improved end-to-end fine-tuning performance over prior ZO baselines, and (iii) a peak GPU memory footprint that remains essentially unchanged relative to standard forward-only ZO methods across varying batch size and sequence length.

\end{itemize}

\section{Background: Zeroth-Order Fine-Tuning}
\label{sec:background}

We consider the standard stochastic optimization problem
\begin{equation}
  \min_{W \in \mathbb{R}^d} F(W)
  \;\triangleq\;
  \mathbb{E}_{B \sim \mathcal{D}}\bigl[f(W; B)\bigr],
  \label{eq:population-objective}
\end{equation}
where $W$ denotes the parameters of a LLM, $\mathcal{D}$ is a data distribution over minibatches $B$, and $f(W; B)$ is the empirical loss. First-order methods estimate $\nabla F(W)$ via backpropagation, which requires storing forward activations and thus dominates the training memory footprint~\citep{zhang2024revisiting}. Zeroth-order methods approximate gradient directions using only function evaluations and avoid backpropagation, making them attractive for memory-limited
fine-tuning of large models \citep{zhang2024revisiting, malladi2023fine}.
In this section we briefly review two representative ZO baselines for LLM fine-tuning:
MeZO and LOZO \citep{malladi2023fine, chen2025enhancing}.

\subsection{Memory-Efficient Full-Space Perturbations}
\label{sec:background-mezo}


MeZO adapts classical Gaussian-smoothing ZO estimators to the LLM
fine-tuning setting. Let $u \sim \mathcal{N}(0, I_d)$ be a standard Gaussian perturbation, organized layer-wise as $u = (U_1,\dots,U_L)$ where each $U_\ell \in \mathbb{R}^{\frac{d}{L}}$ are layer-wise perturbation parameters. Given a small smoothing parameter $\mu > 0$ and a minibatch $B$, MeZO performs two forward passes to evaluate $f(W + \mu u; B)$ and $f(W; B)$ and constructs the finite difference estimator
\begin{equation}
  \widehat{g}^{\text{MeZO}}_\mu(W; B)
  \;=\;
  \frac{f(W + \mu u; B) - f(W ; B)}{ \mu}\, u.
  \label{eq:mezo-estimator}
\end{equation}
This estimator can be interpreted as the gradient of a Gaussian-smoothed objective $F_\mu(W) = \mathbb{E}_{u}[F(W + \mu u)]$ and is used as a surrogate gradient in a standard optimizer~\citep{nesterov2017random}.

To remain memory efficient, MeZO never stores the full perturbation tensor $u$ explicitly: it perturbs parameters in place and records only the random seed needed to regenerate $u$ when forming Eq~\eqref{eq:mezo-estimator}. This design reduces fine-tuning memory by roughly a factor of four relative to first-order methods while maintaining competitive performance on downstream tasks~\citep{zhang2024revisiting, gautam2024variance}. However, $u$ is supported on the entire parameter space, and empirical studies suggest that layer-wise gradients in LLMs are effectively low-rank~\citep{aghajanyan2021intrinsic, li2018measuring, chen2025enhancing}. Full-space isotropic perturbations may spend a substantial portion of the query budget exploring directions that carry little gradient energy.

\subsection{Low-Rank Zeroth-Order Perturbations}
\label{sec:background-lozo}

LOZO aims to better match the observed low-rank structure of gradients by introducing a low-rank ZO estimator \citep{chen2025enhancing}. 
For each layer $\ell$, LOZO samples random Gaussian factors $U_\ell \in \mathbb{R}^{d_{\text{out}} \times r_\ell}$ and $V_\ell \in \mathbb{R}^{d_{\text{in}} \times r_\ell}$ with $r_\ell \ll \min\{d_{\text{out}},d_{\text{in}}\}$, and forms the rank-$r_\ell$ perturbation
\begin{equation}
  \Delta_\ell \;=\; U_\ell V_\ell^\top,
  \label{eq:lozo-layer-perturb}
\end{equation}
so that the full perturbation is $\Delta = (\Delta_1,\dots,\Delta_L)$. LOZO defines the low-rank gradient estimator as 
\begin{equation}
  \widehat{g}^{\text{LOZO}}_\mu(W; B)
  \;=\;
  \frac{f(W + \mu \Delta; B) - f(W ; B)}{\mu}
  \;\frac{\Delta}{r},
  \label{eq:lozo-estimator}
\end{equation}
where $r = \{r_\ell\}_{\ell=1}^L$ and the division is understood layer-wise as $\Delta_\ell / r_\ell$. 
Compared with MeZO, LOZO enforces a low-rank structure that more closely resembles FO gradients in LLM fine-tuning. 

\subsection{Limitations of Existing ZO Baselines}
\label{sec:background-limitations}

In both MeZO and LOZO, the perturbation distribution is determined entirely by parameter shapes and random seeds. The isotropic directions $u$ in MeZO and the low-rank factors $U_\ell, V_\ell$ in LOZO are sampled from fixed distributions and remain independent of what happens inside the network during the forward pass. 

This raises a natural question: can we leverage the intermediate information produced by the forward pass to construct more informative perturbation directions, and hence better zeroth-order gradient approximations? In the next section we analyze the relationship between gradients and activations in LLMs and use these insights to derive design principles for our activation-guided ZO method.

\begin{figure*}[t]
    \centering
    \includegraphics[width=\linewidth]{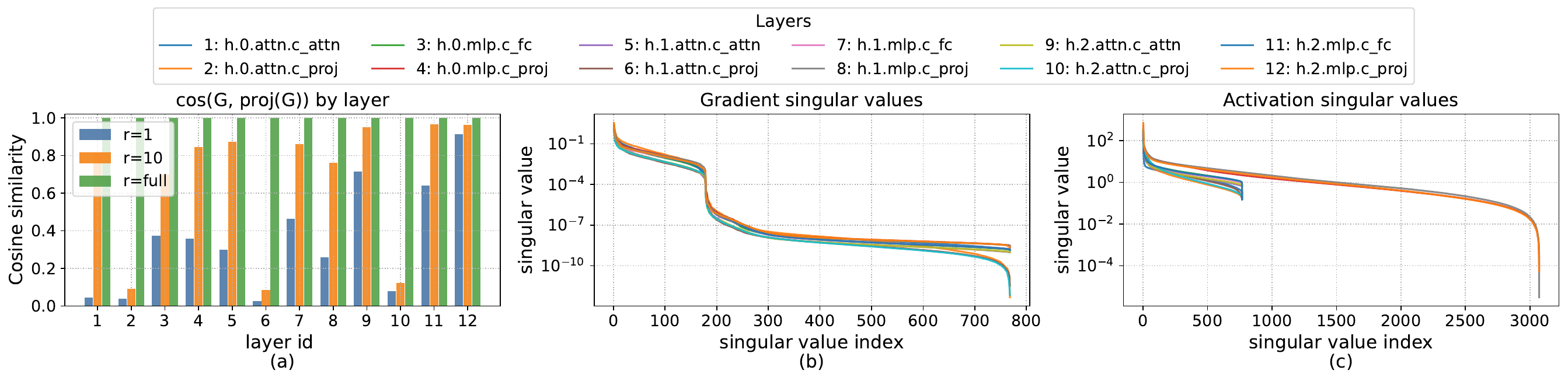}
    \vspace{-4mm}
    \caption{Structural analysis of gradients and activations. (a) Cosine similarity between the true gradient and its projection onto the activation subspace. (b) \& (c) Singular value spectra of gradients and activations.}
    \label{fig:grad-act-structure}
\end{figure*}

\section{Structural Analysis of Gradients and Activations}
\label{sec:motivation-structure}


This section analyzes the structural properties of gradients in linear layers, revealing both deterministic links to forward activations and inherent low-rank characteristics. We show that (i) gradients of linear layers admit a simple matrix factorization involving the activation matrices, (ii) both gradients and activations exhibit strong low-rank behavior, and (iii) the gradient row-space is almost entirely contained in the activation column space. These observations motivate constructing zeroth-order perturbations inside an \emph{activation-informed} low-rank subspace rather than in the full parameter space.

\subsection{Gradient Confinement in Activation Subspaces}
\label{sec:motivation-notation}

We focus on linear layers in LLMs, such as the projection matrices within self-attention mechanisms and fully connected layers in feed-forward networks. These layers dominate the model scale, accounting for most of the trainable parameters in many architectures~\citep{kaplan2020scaling, vaswani2017attention, yang2025qwen3}. 

In Transformer-based architectures, the training data consists of minibatches of sequences. Let $b$ denote the batch size and $T$ the sequence length. While the model processes data as sequences, we can flatten the batch and sequence dimensions into a single dimension $m = b \times T$, representing the total number of tokens in the minibatch.
For a linear layer $\ell$ with weight matrix $W_\ell \in \mathbb{R}^{d_{\text{out}} \times d_{\text{in}}}$, we define the aggregated input activation matrix $H_{\ell} \in \mathbb{R}^{d_{\text{in}} \times m}$ by concatenating the activation vectors of all $m$ tokens. Similarly, let $Q_\ell \in \mathbb{R}^{d_{\text{out}} \times m}$ denote the matrix of upstream gradients with respect to the layer's pre-activations.

Standard backpropagation computes the gradient of the loss with respect to $W_\ell$ by aggregating contributions across all tokens, which takes the compact matrix form:
\begin{equation}
  \nabla_{W_\ell} f(W; B)
  \;=\;
  Q_\ell H_{\ell}^\top.
  \label{eq:grad-matrix-form}
\end{equation}
This factorization reveals a fundamental geometric property: the gradient matrix is formed by a linear combination of the columns of $H_\ell$, i.e., the row-space of the layer-wise gradient is strictly contained in the subspace spanned by the input activations:
\begin{equation}
  \mathrm{row}\bigl( \nabla_{W_\ell} f(W; B) \bigr)
  \;\subseteq\;
  \mathrm{col}\bigl( H_{\ell} \bigr).
  \label{eq:rowspace-in-colspace}
\end{equation}

We further quantify how tightly the gradient concentrates in this activation subspace. Fig.~\ref{fig:grad-act-structure}(a) plots a bar chart of the cosine similarity between the true gradient and its orthogonal projection onto the subspace spanned by the forward activations. Results are obtained when fine-tuning GPT-2~\citep{radford2019language} on the SST-2 dataset~\citep{wang2018glue}. Concretely, we perform an SVD of the activation matrix $H_\ell$ and use the leading $r$ singular vectors to define a rank-$r$ activation subspace; we then project the gradient onto this subspace and compute the cosine similarity between the original gradient and the projected one. We report results for $r=1,10$ and for the full activation subspace ($r=750$). Across layers, the cosine similarity is typically close to 1 when $r\geq 10$, indicating that almost all gradient energy lies in the subspace spanned by the forward activations.

\subsection{Low-Rank Structure of Gradients and Activations}
\label{sec:motivation-lowrank}

The matrix factorization in Eq.~\eqref{eq:grad-matrix-form} reveals the structural dependency of gradients on forward activations. While the weight matrices $W_\ell$ reside in high-dimensional spaces, empirical observations suggest that the actual information content typically concentrates in a much smaller subspace.

To examine this structure more concretely, we compute the singular values of $\nabla_{W_\ell} f(W; B)$ via SVD and plot them (on a log scale) for a few representative layers and training steps. Fig.~\ref{fig:grad-act-structure}(b) shows that the singular values decay rapidly: the spectrum is far from flat, and a small number of leading singular directions dominate the rest. This supports the view that layer-wise gradients are effectively low-rank.

We observe a similar spectral phenomenon for the forward activation matrices. For each $H_{\ell}$, we compute its singular values and visualize them in Fig.~\ref{fig:grad-act-structure}(c). Again, the singular values exhibit pronounced decay, indicating that the majority of the activation energy is concentrated along a few dominant directions.

These results show that layer-wise gradient information is concentrated in a low-dimensional subspace that is almost entirely determined by the corresponding activation matrix, and that this activation subspace itself has a rapidly decaying spectrum and can be captured by a small number of leading directions. From a zeroth-order perspective, it is therefore natural to restrict perturbations to a low-rank subspace extracted from forward activations, rather than sampling arbitrary directions in the full parameter space.
In the next section we instantiate this idea in an activation-guided ZO method that constructs perturbations inside such activation-informed subspaces.

\section{Activation-Guided Zeroth-Order Optimization}
\label{sec:agzo}



We now introduce Activation-Guided Zeroth-Order optimization (AGZO). For linear layers, AGZO perturbs weights inside an activation-guided low-rank subspace; for nonlinear layers, AGZO simply uses Gaussian perturbations. Each iteration reuses the standard forward pass to both evaluate the loss and extract dominant activation directions, without storing activation matrices across iterations.

\subsection{AGZO Algorithm}
\label{sec:agzo-algorithm}

Consider a linear layer $\ell$ with weight matrix $W_\ell \in \mathbb{R}^{d_{\text{out}} \times d_{\text{in}}}$ and activation matrix
$H_{\ell} \in \mathbb{R}^{d_{\text{in}} \times m}$ for the current minibatch, as in Section~\ref{sec:motivation-notation}.
AGZO constructs, on the fly, an activation-informed subspace from $H_{\ell}$ and samples perturbations inside this subspace.

\textbf{Activation-informed subspace.}
Given a target rank $r \ll \min\{d_{\text{out}},d_{\text{in}}, m\}$, we approximate the top $r$ left singular vectors of $H_{\ell}$ via a few steps of power iteration on $H_{\ell} H_{\ell}^\top$, using only matrix–matrix products with $H_{\ell}$ and $H_{\ell}^\top$~\citep{bentbib2015block}. The routine in Algorithm~\ref{alg:power} takes the current activation matrix $H$ and returns an orthonormal basis $A \in \mathbb{R}^{d_{\text{in}} \times r}$ whose columns span a rank-$r$ subspace of $\mathrm{col}(H)$. Once $A_\ell$ is computed for the current minibatch, we discard $H_{\ell}$ to reduce memory consumption.

\begin{algorithm}[t]
\caption{\textsc{SubspaceExtract}$(H, r, K)$: activation-informed basis via power iteration}
\label{alg:power}
\begin{algorithmic}[1]
\STATE \textbf{Input:} activation matrix $H \in \mathbb{R}^{d_{\text{in}} \times m}$,
       target rank $r$, number of power-iteration steps $K$
\STATE Sample Gaussian test matrix $\Omega \in \mathbb{R}^{m \times r}$
\STATE $Y \leftarrow H \Omega$
\FOR{$k = 1,\dots,K$}
  \STATE $[Q,\sim] \leftarrow \mathrm{qr}(Y)$ \hfill // orthonormalize columns
  \STATE $Y \leftarrow H (H^\top Q)$
\ENDFOR
\STATE $[Q,\sim] \leftarrow \mathrm{qr}(Y)$
\STATE \textbf{return} $A \leftarrow Q$ \hfill // $A \in \mathbb{R}^{d_{\text{in}} \times r}$
\end{algorithmic}
\end{algorithm}

\textbf{Perturbations and zeroth order estimator.}
Given $A_\ell \in \mathbb{R}^{d_{\text{in}} \times r_\ell}$, AGZO samples a low-rank perturbation for each linear layer by drawing a left factor $R_\ell \in \mathbb{R}^{d_{\text{out}} \times r_\ell}$ with i.i.d.\ standard normal entries and setting
\begin{equation}
  \Delta_\ell \;=\;
  \begin{cases}
    R_\ell A_\ell^\top, & \text{if layer }\ell\text{ is linear},\\[0.3em]
    u_\ell,             & \text{if layer }\ell\text{ is nonlinear},
  \end{cases}
  \label{eq:agzo-perturb}
\end{equation}
where $u_\ell$ is a Gaussian perturbation with the same shape as $W_\ell$. The full perturbation is then $\Delta = (\Delta_1,\dots,\Delta_L)$.
For linear layers, each $\Delta_\ell$ has rank at most $r$, and its row space is contained in the activation-informed subspace spanned by $A_\ell$.

Given a smoothing parameter $\mu > 0$ and minibatch $B$, we first evaluate
\begin{equation}
  f_0 \;=\; f(W; B),
\end{equation}
and, during this forward pass, compute $\{A_\ell\}$ for all linear layers using Algorithm~\ref{alg:power}. 
We then form $\Delta$ via \eqref{eq:agzo-perturb}, evaluate the perturbed loss
\begin{equation}
  f_+ \;=\; f(W + \mu \Delta; B),
\end{equation}
and define the layer-wise estimator
\begin{equation}
  \widehat{\nabla}_{W_\ell} f^{\text{AGZO}}(W; B)
  \;=\;
  \frac{f_+ - f_0}{\mu} \,\Delta_\ell,
  \quad \ell = 1,\dots,L.
  \label{eq:agzo-estimator}
\end{equation}
Stacking these matrices across layers yields $\widehat{g}^{\text{AGZO}}_\mu(W; B)$, which is used in a ZO gradient descent update.

Algorithm~\ref{alg:agzo} summarizes one AGZO iteration. Subspace extraction for each layer is done immediately when its activation matrix becomes available in the forward pass, so full activations are never stored beyond this step.


\begin{algorithm}[t]
\caption{AGZO Iteration}
\label{alg:agzo}
\begin{algorithmic}[1]
\STATE \textbf{Input:} Weights $W$, ranks $\{r_\ell\}$, scalars $\mu, \eta, K$.
\STATE \textbf{1. Forward \& Subspace Extraction (via Hooks):}
\STATE Run forward pass on batch $B$ to compute $f_0$. 
\STATE \textbf{During} computation at each linear layer $\ell$: Extract $A_\ell \leftarrow \textsc{SubspaceExtract}(H_{\ell}, r_\ell, K)$.

\STATE \textbf{2. In-Place Perturbation:}
\FOR{layer $\ell = 1 \dots L$}
    \STATE Sample random seed $s_\ell$.
    \IF{layer $\ell$ is linear}
        \STATE Generate $R_\ell \sim \mathcal{N}(0, I)$ from $s_\ell$; \quad Set update matrix $\Delta_\ell = R_\ell A_\ell^\top$.
    \ELSE
        \STATE Generate $\Delta_\ell \sim \mathcal{N}(0, I)$ from $s_\ell$.
    \ENDIF
    \STATE $W_\ell \leftarrow W_\ell + \mu \Delta_\ell$ \quad \COMMENT{Apply perturbation in-place}
\ENDFOR

\STATE \textbf{3. Gradient Estimate \& Update:}
\STATE Compute $f_+ = f(W; B)$ with perturbed weights.
\STATE Set projected gradient scalar $g \leftarrow (f_+ - f_0)/\mu$.
\FOR{layer $\ell = 1 \dots L$}
    \STATE Regenerate $\Delta_\ell$ using stored seed $s_\ell$ (and $A_\ell$ if linear).
    \STATE $W_\ell \leftarrow W_\ell - \mu \Delta_\ell - \eta \cdot g \cdot \Delta_\ell$ \quad \COMMENT{Restore \& Update}
\ENDFOR
\end{algorithmic}
\end{algorithm}

In practice, a small number of power-iteration steps ($K = 3$) per layer is enough to get satisfactory approximation, which adds only a few matrix multiplications on top of the forward pass. The dominant cost per iteration is thus forward evaluations, as in MeZO and LOZO.

\subsection{Memory Usage Analysis}
\label{sec:agzo-memory}

We analyze the memory footprint of each method, focusing on the \emph{optimization overhead}---defined as the storage required beyond the fixed model parameters. Both MeZO and LOZO incur essentially no overhead, as their perturbations (whether isotropic or low-rank factors) are generated from random seeds and can be regenerated on the fly. AGZO requires storing the activation-informed basis $A_\ell \in \mathbb{R}^{d_{\text{in}} \times r}$ for each layer, as it depends on the input data and cannot be recovered from a seed. However, this overhead is negligible compared to the model size: for a weight matrix $W_\ell$ with $d_{\text{out}} \times d_{\text{in}}$ parameters, the basis $A_\ell$ requires only $d_{\text{in}} \times r$. With $r \ll d_{\text{out}}$, this consumes a tiny fraction of the memory needed for the weights themselves. 

In our experiments, we set $r=1$. This choice minimizes the storage overhead for the basis $A_\ell$ to the theoretical lower bound. More importantly, since the AGZO perturbation is stochastic within the subspace spanned by $A_\ell$, restricting the rank to 1 forces the random exploration to concentrate entirely on the single most dominant direction of the activation energy. This prevents the update signal from being diluted across less significant components.

\section{Provable Superiority of AGZO}
\label{sec:theory}

We now analyze the AGZO estimator introduced in section~\ref{sec:agzo}.
The goal is to understand (i) its mathematical essence and how far it is from the true gradient, and (ii) how its directional quality compares to MeZO. We focus on linear layers
where AGZO uses low-rank perturbations as in Eq.~\eqref{eq:agzo-perturb}
and the zeroth order estimator in Eq.~\eqref{eq:agzo-estimator}.

\subsection{AGZO is A Projected Gradient Estimator}
\label{subsec:subspace-smoothing}

We first show that AGZO can be interpreted as estimating a
projected gradient of a subspace-smoothed objective.

Condition on the subspace bases $A := \{A_\ell\}$ computed at
the current iterate $W$.
Let $\Delta(W,R)$ be the random perturbation defined in Eq.~\eqref{eq:agzo-perturb} 
and define the subspace-smoothed objective
\begin{equation}
    F_{\mu,A}(W)
    :=
    \mathbb{E}_R\big[F\big(W + \mu\,\Delta(W,R)\big)\big].
    \label{eq:subspace-smooth}
\end{equation}

The next proposition shows that, up to smoothing, AGZO is an exact gradient estimator projected onto the activation-informed subspace.

\begin{proposition}
\label{prop:subspace-smoothing}
Assume $F$ has $L$-Lipschitz gradient. The AGZO estimator satisfies, for each
linear layer $\ell$,
\begin{equation}
    \mathbb{E}_{R,B}\Big[
        \widehat{\nabla}^{\mathrm{AGZO}}_{W_\ell}(W;B)    \Big]
    =
    \nabla_{W_\ell} F_{\mu,A}(W)\,A_\ell A_\ell^\top.
    \label{eq:agzo-smoothing-identity}
\end{equation}
In particular, AGZO estimates the gradient of the subspace-smoothed
objective~\eqref{eq:subspace-smooth}, projected onto
$\mathrm{span}(A_\ell)$.
\end{proposition}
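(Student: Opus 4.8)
The plan is to reduce the claimed identity to a matrix-variate Gaussian integration-by-parts (Stein) computation, carried out one linear layer at a time. First I would dispose of the baseline term: since $f(W;B)$ does not depend on the perturbation randomness $R$, and the minibatch $B$ is drawn independently of $R$, we get $\mathbb{E}_{R,B}[f(W;B)\,\Delta_\ell] = F(W)\,\mathbb{E}_R[\Delta_\ell] = 0$ because $\Delta_\ell = R_\ell A_\ell^\top$ with $\mathbb{E}[R_\ell]=0$. Independence of $B$ and $R$ also lets us replace $f(W+\mu\Delta;B)$ by its population counterpart $F(W+\mu\Delta)$, so that
\[
\mathbb{E}_{R,B}\bigl[\widehat{\nabla}^{\mathrm{AGZO}}_{W_\ell}(W;B)\bigr]
= \frac{1}{\mu}\,\mathbb{E}_R\bigl[F(W+\mu\Delta)\,R_\ell\bigr]\,A_\ell^\top .
\]
It therefore suffices to evaluate $\mathbb{E}_R[F(W+\mu\Delta)\,R_\ell]$, a $d_{\mathrm{out}}\times r_\ell$ matrix.

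Next I would condition on all factors $R_{-\ell}:=\{R_j\}_{j\neq\ell}$ and apply Stein's lemma entrywise in $R_\ell$: for a $C^1$ scalar function $g$ of a matrix $R_\ell$ with i.i.d.\ $\mathcal N(0,1)$ entries, $\mathbb{E}[g(R_\ell)(R_\ell)_{ij}] = \mathbb{E}[\partial g/\partial (R_\ell)_{ij}]$. Here $g(R_\ell)=F(W+\mu\Delta)$ depends on $R_\ell$ only through the $\ell$-th block $\mu R_\ell A_\ell^\top$, so the chain rule together with $\partial (R_\ell A_\ell^\top)_{ab}/\partial (R_\ell)_{ij}=\delta_{ai}(A_\ell)_{bj}$ gives $\partial g/\partial (R_\ell)_{ij}=\mu\,[\nabla_{W_\ell}F(W+\mu\Delta)\,A_\ell]_{ij}$. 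Taking the outer expectation over $R_{-\ell}$ then yields
\[
\mathbb{E}_R\bigl[F(W+\mu\Delta)\,R_\ell\bigr] = \mu\,\mathbb{E}_R\bigl[\nabla_{W_\ell}F(W+\mu\Delta)\bigr]\,A_\ell ,
\]
and substituting back and cancelling $\mu$ leaves $\mathbb{E}_R[\nabla_{W_\ell}F(W+\mu\Delta)]\,A_\ell A_\ell^\top$.

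The remaining step is to identify $\mathbb{E}_R[\nabla_{W_\ell}F(W+\mu\Delta)]$ with $\nabla_{W_\ell}F_{\mu,A}(W)$. Because we condition on the data-dependent bases $A$, the smoothed objective $F_{\mu,A}(W)=\mathbb{E}_R[F(W+\mu\Delta(W,R))]$ is differentiated in $W_\ell$ with $A_\ell$ held fixed, so the map $W_\ell\mapsto W_\ell+\mu R_\ell A_\ell^\top$ has identity derivative and the chain rule gives $\nabla_{W_\ell}F_{\mu,A}(W)=\mathbb{E}_R[\nabla_{W_\ell}F(W+\mu\Delta)]$, once $\nabla_{W_\ell}$ is exchanged with $\mathbb{E}_R$. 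This interchange is where the $L$-Lipschitz-gradient hypothesis enters (and it also underwrites the regularity required by Stein's lemma): Lipschitzness of $\nabla F$ forces $\nabla F$ to grow at most linearly, which, combined with the finiteness of all Gaussian moments of $\Delta$, furnishes the integrable dominating envelope for dominated convergence. Combining the three displays gives Eq.~\eqref{eq:agzo-smoothing-identity}, and since $A_\ell^\top A_\ell=I_{r_\ell}$ the matrix $A_\ell A_\ell^\top$ is exactly the orthogonal projector onto $\mathrm{span}(A_\ell)$, which is the claimed interpretation.

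I expect the main obstacle to be bookkeeping rather than conceptual: getting the matrix Stein identity and the chain rule through $\Delta_\ell=R_\ell A_\ell^\top$ exactly right so that $A_\ell$ and its transpose land on the correct sides and the projector emerges cleanly, and — a point easy to overlook — being explicit that $F_{\mu,A}$ is defined with the frozen bases $A$, so that one must \emph{not} differentiate through $A_\ell(W)$ when forming $\nabla_{W_\ell}F_{\mu,A}(W)$.
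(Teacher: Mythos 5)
Your proposal is correct and follows essentially the same route as the paper's proof (Theorem A.3(a)--(b)): a block-wise matrix Stein identity applied to $R_\ell$, combined with the chain rule through $\Delta_\ell = R_\ell A_\ell^\top$ to produce the factor $\nabla_{W_\ell}F(W+\mu\Delta)\,A_\ell$, followed by identifying $\mathbb{E}_R[\nabla_{W_\ell}F(W+\mu\Delta)]$ with $\nabla_{W_\ell}F_{\mu,A}(W)$ and using $\mathbb{E}[R_\ell]=0$ to discard the baseline. The only cosmetic difference is that you average over $B$ first and work with the population objective $F$, whereas the paper works with $f(\cdot,B)$ for fixed $B$ and averages over $B$ at the end; your explicit note that $A$ must be held frozen when differentiating $F_{\mu,A}$ is a worthwhile clarification that the paper leaves implicit.
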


\begin{proof}
See Theorem~\ref{thm:subspace-identity}(b) in Appendix.
\end{proof}

Under standard smoothness assumptions, the difference between $\nabla F_{\mu,A}$ and $\nabla F$ after projection vanishes linearly as $\mu\to 0$.

\begin{proposition}
\label{prop:smoothing-bias}
Suppose $F$ has $L$-Lipschitz gradient.
Then for each layer $\ell$ there exists a constant $C_\ell>0$,
depending only on $L$ and the layer dimensions, such that
\begin{equation}
    \bigl\|
        \nabla_{W_\ell} F_{\mu,A}(W)\,A_\ell A_\ell^\top
        - \nabla_{W_\ell} F(W)\,A_\ell A_\ell^\top
    \bigr\|_F
    \;\le\;
    C_\ell\,\mu.
    \label{eq:smoothing-bias}
\end{equation}
\end{proposition}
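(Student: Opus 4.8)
The plan is to express the projected smoothing bias as an expectation over the random perturbation and then control it with a single application of the $L$-Lipschitz-gradient hypothesis. Once we condition on the bases $A=\{A_\ell\}$ appearing in \eqref{eq:subspace-smooth}, the perturbation $\Delta := \Delta(W,R)$ depends only on the auxiliary randomness $R$ and not on the argument of $F_{\mu,A}$ (its linear blocks are $R_\ell A_\ell^\top$ with $A_\ell$ held fixed, its nonlinear blocks are Gaussian). Differentiating \eqref{eq:subspace-smooth} in $W_\ell$ and interchanging $\nabla_{W_\ell}$ with $\mathbb{E}_R$ then gives
\[
  \nabla_{W_\ell} F_{\mu,A}(W)
  \;=\; \mathbb{E}_R\!\big[(\nabla_{W_\ell} F)(W+\mu\Delta)\big],
\]
where the interchange, and the differentiability of $F_{\mu,A}$ itself, follow from dominated convergence using the bound $\|(\nabla F)(W+\mu\Delta)\|_F \le \|\nabla F(W)\|_F + L\mu\|\Delta\|_F$ and the finite second moment of $\Delta$.

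Next I would subtract $\nabla_{W_\ell} F(W)$, pull it inside the expectation, and apply the triangle inequality:
\[
  \big\| \nabla_{W_\ell} F_{\mu,A}(W) - \nabla_{W_\ell} F(W) \big\|_F
  \;\le\; \mathbb{E}_R\!\big[\,\big\| (\nabla_{W_\ell} F)(W+\mu\Delta) - (\nabla_{W_\ell} F)(W) \big\|_F\,\big].
\]
Because the layer-$\ell$ block of any gradient has Frobenius norm at most that of the full gradient, the $L$-Lipschitz gradient assumption bounds each integrand by $L\|\mu\Delta\|_F = L\mu\|\Delta\|_F$. Finally, right-multiplying the difference by $A_\ell A_\ell^\top$ cannot increase the Frobenius norm, since $A_\ell$ has orthonormal columns and hence $\|A_\ell A_\ell^\top\|_{\mathrm{op}} = 1$; so the quantity on the left of \eqref{eq:smoothing-bias} is at most $L\mu\,\mathbb{E}_R[\|\Delta\|_F]$.

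It then remains to bound the first moment of $\|\Delta\|_F$ by a dimension-dependent constant. By Jensen, $\mathbb{E}_R[\|\Delta\|_F] \le (\mathbb{E}_R\|\Delta\|_F^2)^{1/2}$, and $\|\Delta\|_F^2 = \sum_{\ell'} \|\Delta_{\ell'}\|_F^2$. For a linear layer $A_{\ell'}^\top A_{\ell'} = I_{r_{\ell'}}$ gives $\|R_{\ell'} A_{\ell'}^\top\|_F^2 = \|R_{\ell'}\|_F^2$, whose expectation is $d_{\mathrm{out},\ell'}\,r_{\ell'}$; for a nonlinear layer the expectation is just the parameter count of that layer. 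Summing over $\ell'$, $\mathbb{E}_R\|\Delta\|_F^2$ is a finite constant $D$ depending only on the layer dimensions, so $C_\ell := L\sqrt{D}$ (which one may further index by $\ell$ without loss) satisfies \eqref{eq:smoothing-bias}.

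The only genuinely delicate step is the first one — verifying that $F_{\mu,A}$ is differentiable at $W$ and that the gradient passes through the expectation; everything afterward is a one-line Lipschitz estimate plus a routine Gaussian second-moment computation. If one prefers to avoid the interchange, an alternative is to write, inside the expectation, $F(W+\mu\Delta) - F(W) = \int_0^1 \langle (\nabla F)(W+s\mu\Delta),\, \mu\Delta\rangle\, ds$ and differentiate that smooth representation; the Lipschitz bound again dominates the integrand, and the rest proceeds as above.
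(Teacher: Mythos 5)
Your proof is correct, but it takes a genuinely different and in fact simpler route than the paper. The paper's proof of this statement (Theorem~\ref{thm:subspace-identity}(c)) passes through the two-point finite-difference identity from part~(b): it first shows, via a Stein-type integration-by-parts, that $\nabla_{W_\ell} f_{\mu,A}(W,B)\,A_\ell A_\ell^\top = \mathbb{E}_R[\phi(W,\Delta;B)\,R_\ell A_\ell^\top]$, then Taylor-expands $f$ (not $\nabla f$) to second order so that $\phi = \langle \nabla f(W,B),\Delta\rangle + r_\mu$ with $|r_\mu|\le \tfrac{L}{2}\mu\|\Delta\|_F^2$, recovers the gradient term by the Gaussian moment identity $\mathbb{E}[\langle M,R\rangle R]=M$, and bounds the error by the mixed third moment $\mathbb{E}[\|\Delta\|_F^2\|R_\ell\|_F]$. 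You instead differentiate under the expectation to get $\nabla_{W_\ell} F_{\mu,A}(W) = \mathbb{E}_R[\nabla_{W_\ell} F(W+\mu\Delta)]$, subtract $\nabla_{W_\ell} F(W)$, invoke the Lipschitz property of $\nabla F$ directly (rather than a Taylor remainder for $f$), note that right-multiplication by the orthogonal projector $A_\ell A_\ell^\top$ is a contraction in Frobenius norm, and close with the first Gaussian moment $\mathbb{E}\|\Delta\|_F$. Your route is shorter and requires only a first moment rather than a mixed third moment; it also makes transparent that the bound holds even before projecting by $A_\ell A_\ell^\top$, so the projector only improves things. The paper's route is less direct but is structured so that part~(c) reuses the Stein/two-point machinery already established for parts~(a) and~(b), which are the content of Proposition~\ref{prop:subspace-smoothing} and are needed elsewhere in the analysis. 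Both proofs are valid; yours is the more economical way to obtain this particular proposition in isolation.
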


\begin{proof}
See Theorem~\ref{thm:subspace-identity}(c) in Appendix.
\end{proof}

The remaining component of the bias comes from projecting
$\nabla_{W_\ell} F(W)$ onto $\mathrm{span}(A_\ell)$.
Recall the gradient factorization~\eqref{eq:grad-matrix-form} and the subspace analysis~\eqref{eq:rowspace-in-colspace}. In AGZO, $A_\ell$ is constructed to approximate a low-rank activation subspace for $\mathrm{col}(H_\ell)$. For exposition, consider the idealized case where this subspace exactly supports the gradient.

\begin{corollary}
\label{corollary:ideal-subspace}
Suppose for a given layer $\ell$ and all minibatches $B$,
\begin{equation}
    \mathrm{row}\bigl(\nabla_{W_\ell} f(W,B)\bigr)
    \subseteq
    \mathrm{span}(A_\ell),
    \label{eq:ideal-subspace-assump}
\end{equation}
so that $\nabla_{W_\ell} F(W) = \nabla_{W_\ell} F(W)\,A_\ell A_\ell^\top$.
Then combining~\eqref{eq:agzo-smoothing-identity} and
\eqref{eq:smoothing-bias} yields
\begin{equation}
    \Big\|
        \mathbb{E}_{R,B}\big[
            \widehat{\nabla}^{\mathrm{AGZO}}_{W_\ell}(W;B)
            \,\big|\,
            A
        \big]
        - \nabla_{W_\ell} F(W)
    \Big\|_F
    \;\le\;
    C_\ell\,\mu.
    \label{eq:ideal-bias}
\end{equation}
Thus, in this regime AGZO is an asymptotically unbiased
estimator of the true layer gradient as $\mu\to 0$.
\end{corollary}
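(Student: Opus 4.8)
The proof of Corollary~\ref{corollary:ideal-subspace} is essentially a bookkeeping argument that combines the two preceding propositions, so the plan is to assemble the pieces rather than introduce new machinery. First I would unpack the hypothesis: if $\mathrm{row}(\nabla_{W_\ell} f(W,B)) \subseteq \mathrm{span}(A_\ell)$ holds for \emph{every} minibatch $B$, then right-multiplication by the orthogonal projector $A_\ell A_\ell^\top$ fixes each $\nabla_{W_\ell} f(W,B)$, and taking expectations over $B$ (which commutes with the fixed linear map $X \mapsto X A_\ell A_\ell^\top$, since $A$ is conditioned on) gives $\nabla_{W_\ell} F(W) = \nabla_{W_\ell} F(W)\,A_\ell A_\ell^\top$, exactly the displayed consequence stated in the corollary. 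This is the only place the idealized assumption~\eqref{eq:ideal-subspace-assump} is used.

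Next I would chain the triangle inequality through the two bias terms. By Proposition~\ref{prop:subspace-smoothing}, the conditional expectation $\mathbb{E}_{R,B}[\widehat{\nabla}^{\mathrm{AGZO}}_{W_\ell}(W;B)\mid A]$ equals $\nabla_{W_\ell} F_{\mu,A}(W)\,A_\ell A_\ell^\top$. Subtracting $\nabla_{W_\ell} F(W)$ and using the identity from the previous paragraph to rewrite it as $\nabla_{W_\ell} F(W)\,A_\ell A_\ell^\top$, the difference becomes
\begin{equation*}
  \nabla_{W_\ell} F_{\mu,A}(W)\,A_\ell A_\ell^\top - \nabla_{W_\ell} F(W)\,A_\ell A_\ell^\top,
\end{equation*}
whose Frobenius norm is bounded by $C_\ell\,\mu$ directly by Proposition~\ref{prop:smoothing-bias}. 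This yields~\eqref{eq:ideal-bias}. The final sentence about asymptotic unbiasedness is then immediate: the right-hand side $C_\ell\,\mu \to 0$ as $\mu \to 0$, so the conditional expectation converges to $\nabla_{W_\ell} F(W)$ in Frobenius norm.

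The only mild subtlety worth spelling out is the interchange of expectation and projection in the first step: one should note that conditioning on $A$ means $A_\ell$ is a deterministic (measurable) quantity, so $\mathbb{E}_B[\,\cdot\,A_\ell A_\ell^\top\mid A] = (\mathbb{E}_B[\,\cdot\mid A])\,A_\ell A_\ell^\top$ by linearity, and under assumption~\eqref{eq:ideal-subspace-assump} the per-batch integrand is unchanged by the projector. No part of this requires anything beyond the Lipschitz-gradient hypothesis already in force for Propositions~\ref{prop:subspace-smoothing} and~\ref{prop:smoothing-bias}. There is no real obstacle here; the argument is a two-line composition once the idealized assumption is invoked, and the corollary is included mainly to make explicit what AGZO's bias reduces to in the favorable regime identified by the structural analysis of Section~\ref{sec:motivation-structure}.
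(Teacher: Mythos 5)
Your proof is correct and follows exactly the intended argument: use the idealized row-space assumption to replace $\nabla_{W_\ell} F(W)$ with its projection $\nabla_{W_\ell} F(W)\,A_\ell A_\ell^\top$, substitute the expectation identity from Proposition~\ref{prop:subspace-smoothing}, and then appeal to the smoothing-bias bound in Proposition~\ref{prop:smoothing-bias}. The paper leaves this as an immediate consequence of the two propositions, so your write-up is essentially the canonical proof; one small remark is that you announce a triangle inequality but never need it --- the final bound is a direct substitution, not a sum of two bias terms, as your own display makes clear.
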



In practice, the activation-guided subspace only approximates the row space. Section~\ref{sec:motivation-structure} shows that the overlap between $\nabla_{W_\ell} F(W)$ and $\nabla_{W_\ell} F(W)\,A_\ell A_\ell^\top$ is nevertheless very close to one if the approximation rank is high enough.

\subsection{Directional Fidelity of AGZO}
\label{subsec:cosine}

Since the update length can be tuned by step size, the effectiveness of a gradient estimator mainly depends on its directional quality: how well its direction aligns with the true gradient. This subsection analyzes the expected cosine similarity between the estimator and the true gradient, and compares AGZO with
MeZO in a noiseless setting.

Let $G\in\mathbb{R}^{d_{\text{out}}\times d_{\text{in}}}$ denote the true gradient $\nabla_{W_\ell} F(W, B)$, and let $\widehat{G}$ be the approximated gradient. Define the cosine similarity:
\begin{equation}
    \cos(\widehat{G},G)
    :=
    \frac{\langle \widehat{G},G\rangle_F}{\|\widehat{G}\|_F\,\|G\|_F},
    \quad
    \langle \widehat G,G\rangle_F := \mathrm{tr}(\widehat G^\top G).
    \label{eq:cos-def}
\end{equation}
We analyze the expected cosine similarity in a noiseless setting ($\mu\to 0$) where the finite difference oracle returns exact directional derivatives and stochastic minibatch noise is ignored.


\begin{theorem}
\label{thm:agzo-cosine}
Let $G\in\mathbb{R}^{d_{\text{out}}\times d_{\text{in}}}$ be fixed and $A\in\mathbb{R}^{d_{\text{in}}\times r}$ have orthonormal columns. Let $\widehat{G}^{\mathrm{AGZO}}_0$ be the noiseless AGZO estimator, which has the form:
\begin{equation}
    \widehat{G}^{\mathrm{AGZO}}_0
    =
    \langle G,\Delta\rangle_F\,\Delta
    =
    \langle G, R A^\top\rangle_F\,R A^\top.
    \label{eq:agzo-noiseless}
\end{equation}
Then
\begin{equation}
    \mathbb{E}_R\big[
        \cos\big(\widehat{G}^{\mathrm{AGZO}}_0, G\big)
    \big]
    =
    \beta_{d_{\text{out}}r}\,
    \frac{\|G A\|_F}{\|G\|_F},
    \label{eq:agzo-cosine}
\end{equation}
where
\begin{equation}
    \beta_D
    =
    \frac{\Gamma\!\left(\tfrac{D}{2}\right)}
         {\sqrt{\pi}\,\Gamma\!\left(\tfrac{D+1}{2}\right)},
\end{equation}
and for any $D \ge 2$, $\beta_D$ satisfies the tight bounds:
\begin{equation}
    \sqrt{\frac{2}{\pi D}}
    \;\le\;
    \beta_D
    \;\le\;
    \sqrt{\frac{2}{\pi(D-1)}}.
\end{equation}
\end{theorem}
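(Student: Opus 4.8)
The plan is to compute the expectation in \eqref{eq:agzo-cosine} by exploiting the rotational symmetry of the Gaussian factor $R$. Write $\Delta = RA^\top$ with $A$ having orthonormal columns, so that $\|\Delta\|_F = \|R\|_F$ and $\langle G,\Delta\rangle_F = \langle GA, R\rangle_F$ (using $\mathrm{tr}(\Delta^\top G) = \mathrm{tr}(AR^\top G) = \mathrm{tr}(R^\top G A)$). Substituting into \eqref{eq:agzo-noiseless}, the estimator becomes $\widehat{G}^{\mathrm{AGZO}}_0 = \langle GA,R\rangle_F\,RA^\top$, and since $\|\widehat G^{\mathrm{AGZO}}_0\|_F = |\langle GA,R\rangle_F|\,\|R\|_F$ and $\langle \widehat G^{\mathrm{AGZO}}_0, G\rangle_F = \langle GA,R\rangle_F\,\langle R, GA\rangle_F = \langle GA,R\rangle_F^2$, the cosine simplifies to
\begin{equation}
    \cos\big(\widehat G^{\mathrm{AGZO}}_0,G\big)
    =
    \frac{|\langle GA,R\rangle_F|}{\|R\|_F}\cdot\frac{1}{\|G\|_F}.
\end{equation}
So everything reduces to computing $\mathbb{E}_R\big[|\langle GA,R\rangle_F|/\|R\|_F\big]$, where $R\in\mathbb{R}^{d_{\text{out}}\times r}$ is standard Gaussian. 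Flattening $R$ into a vector of dimension $D := d_{\text{out}} r$, this is $\mathbb{E}\big[|\langle v, g\rangle|/\|g\|\big]$ with $g\sim\mathcal N(0,I_D)$ and $v$ the unit vector $GA/\|GA\|_F$; by rotational invariance of the Gaussian this equals $\mathbb{E}[|g_1|/\|g\|]$ times $\|GA\|_F$, which together with the first factor of $1/\|G\|_F$ produces the claimed $\|GA\|_F/\|G\|_F$.

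Next I would evaluate $\beta_D := \mathbb{E}[|g_1|/\|g\|]$ for $g\sim\mathcal N(0,I_D)$. The standard approach is to note that $g/\|g\|$ is uniform on the sphere $S^{D-1}$, independent of $\|g\|$, so $\beta_D = \mathbb{E}[|z_1|]$ where $z$ is uniform on $S^{D-1}$. The first coordinate $z_1$ has density proportional to $(1-t^2)^{(D-3)/2}$ on $[-1,1]$, so
\begin{equation}
    \beta_D
    =
    \frac{2\int_0^1 t\,(1-t^2)^{(D-3)/2}\,dt}
         {\int_{-1}^1 (1-t^2)^{(D-3)/2}\,dt}.
\end{equation}
The numerator integral is elementary ($=\tfrac{1}{D-1}$ after the substitution $s=t^2$), and the denominator is a Beta integral equal to $B(\tfrac12,\tfrac{D-1}{2}) = \sqrt{\pi}\,\Gamma(\tfrac{D-1}{2})/\Gamma(\tfrac{D}{2})$. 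Simplifying using $\Gamma(\tfrac{D+1}{2}) = \tfrac{D-1}{2}\Gamma(\tfrac{D-1}{2})$ gives exactly $\beta_D = \Gamma(D/2)/(\sqrt\pi\,\Gamma((D+1)/2))$. Alternatively, one can avoid the sphere argument entirely: write $\mathbb{E}[|g_1|/\|g\|]$ directly via the joint density, pull out $g_1$ in polar-type coordinates, and recognize the resulting ratio of Gamma functions; either route is routine.

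For the two-sided bound on $\beta_D$, the cleanest tool is Gautschi's inequality (or the log-convexity of $\Gamma$): for $0<s<1$,
\begin{equation}
    x^{1-s} \le \frac{\Gamma(x+1)}{\Gamma(x+s)} \le (x+1)^{1-s}.
\end{equation}
Applying this with $x = \tfrac{D-1}{2}$ and $s = \tfrac12$ gives $\Gamma((D+1)/2)/\Gamma(D/2) \ge \big(\tfrac{D-1}{2}\big)^{1/2}$ and $\le \big(\tfrac{D+1}{2}\big)^{1/2}$, i.e. $\sqrt{2/(\pi(D+1))} \le \beta_D \le \sqrt{2/(\pi(D-1))}$. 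The stated lower bound $\sqrt{2/(\pi D)}$ is weaker than $\sqrt{2/(\pi(D+1))}$ in the wrong direction, so a small amount of care is needed: one instead uses the sharper Kershaw/Wendel-type bound $\Gamma(x+1)/\Gamma(x+\tfrac12) \le \sqrt{x+\tfrac14}$, which with $x=\tfrac{D-1}{2}$ gives $\Gamma((D+1)/2)/\Gamma(D/2)\le\sqrt{D/2\cdot\tfrac12\cdot 2}$... to be safe I would just prove $\Gamma((D+1)/2)/\Gamma(D/2) \le \sqrt{D/2}$ directly from log-convexity of $\Gamma$ applied to the three points $D/2,\ (D+1)/2,\ (D+2)/2$ together with the recursion $\Gamma((D+2)/2) = \tfrac{D}{2}\Gamma(D/2)$, which yields $\Gamma((D+1)/2)^2 \le \Gamma(D/2)\Gamma((D+2)/2) = \tfrac{D}{2}\Gamma(D/2)^2$ and hence $\beta_D \ge \sqrt{2/(\pi D)}$; the upper bound $\beta_D \le \sqrt{2/(\pi(D-1))}$ follows symmetrically from log-convexity at $(D-1)/2,\ D/2,\ (D+1)/2$.

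The main obstacle is not any single computation but getting the bound constants to match the exact statement: the Gamma-ratio $\Gamma((D+1)/2)/\Gamma(D/2)$ is trapped between $\sqrt{(D-1)/2}$ and $\sqrt{D/2}$, and the correct pairing of these with the two sides of the claimed inequality (lower bound from the $\sqrt{D/2}$ estimate, upper bound from the $\sqrt{(D-1)/2}$ estimate) must be done carefully, using the right instance of log-convexity of $\Gamma$ and the Gamma recursion on each side; everything else (the symmetry reduction, the Beta integral, the $D\ge 2$ validity ensuring the sphere density makes sense) is standard.
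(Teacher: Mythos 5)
Your argument is correct and follows essentially the same route as the paper: reduce the cosine to $|\langle GA,R\rangle|/(\|R\|_F\|G\|_F)$ using $\|RA^\top\|_F=\|R\|_F$, exploit rotational invariance of the Gaussian to rewrite the expectation as $\mathbb{E}[|U_1|]$ for a uniform point on $\mathbb{S}^{D-1}$, evaluate the resulting Beta integral, and then bound the Gamma ratio $\Gamma(\tfrac{D+1}{2})/\Gamma(\tfrac{D}{2})$ between $\sqrt{(D-1)/2}$ and $\sqrt{D/2}$. The only (minor) divergence is in the last step: the paper cites Gautschi's and Wendel's inequalities, whereas you correctly notice that naive Gautschi alone yields only the weaker lower bound $\sqrt{2/(\pi(D+1))}$ and instead derive both two-sided bounds directly from log-convexity of $\Gamma$ applied to consecutive half-integers together with the functional equation; this is a slightly more self-contained derivation of the same inequalities and is sound.
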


\begin{proof}
See Appendix~\ref{proof:agzo-cosine}.
\end{proof}

The factor $\|G A\|_F/\|G\|_F$ has a natural geometric interpretation: it is precisely the fraction of gradient Frobenius energy captured by the AGZO subspace, since $\|G A\|_F = \|G A A^\top\|_F$ (See remark~\ref{remark:Equivalent_projector_form:} in Appendix). Thus AGZO benefits both from working in a lower effective dimension $mr$ (through $\beta_{mr}$) and from aligning its perturbation subspace with directions where $G$ has large energy.

For the MeZO baseline with Gaussian perturbations, the estimator has the same form but with $A = I_{d_{\text{in}}}$ and $\Delta = R\in\mathbb{R}^{d_{\text{out}}\times d_{\text{in}}}$ dense. Theorem~\ref{thm:agzo-cosine} then yields the following corollary.

\begin{corollary}
\label{cor:mezo-cosine}
Consider the noiseless MeZO estimator $\widehat{G}^{\mathrm{MEZO}}_0$ constructed from dense Gaussian directions
$\Delta = R$ with $R\sim\mathcal{N}(0, I_{d_{\text{out}}\times d_{\text{in}}})$. Then
\begin{equation}
    \mathbb{E}_R\big[
        \cos\big(\widehat{G}^{\mathrm{MEZO}}_0, G\big)
    \big]
    =
    \beta_{d_{\text{out}}d_{\text{in}}}.
    \label{eq:mezo-cosine}
\end{equation}
This corresponds to the special case of Theorem~\ref{thm:agzo-cosine} with $A = I_{d_{\text{in}}}$ and $\|G A\|_F/\|G\|_F = 1$.
\end{corollary}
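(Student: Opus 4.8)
The plan is to obtain Corollary~\ref{cor:mezo-cosine} directly as the special case $A = I_{d_{\text{in}}}$ of Theorem~\ref{thm:agzo-cosine}, so that essentially no new computation is required. First I would observe that, restricted to a single linear layer $W_\ell\in\mathbb{R}^{d_{\text{out}}\times d_{\text{in}}}$, the noiseless MeZO estimator uses a dense Gaussian perturbation $\Delta = R$ with $R\sim\mathcal{N}(0,I_{d_{\text{out}}\times d_{\text{in}}})$, and that this is exactly the AGZO perturbation $\Delta = R A^\top$ with the choice $A = I_{d_{\text{in}}}$: the identity matrix trivially has orthonormal columns, here $r = d_{\text{in}}$, and $R A^\top = R$ has the correct shape $d_{\text{out}}\times d_{\text{in}}$. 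Hence the two noiseless estimators coincide as random matrices, $\widehat{G}^{\mathrm{MEZO}}_0 = \langle G, R\rangle_F\,R = \widehat{G}^{\mathrm{AGZO}}_0$, and their expected cosine similarities with $G$ are equal.

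Second, I would instantiate Eq.~\eqref{eq:agzo-cosine} at $A = I_{d_{\text{in}}}$. The index of $\beta$ becomes $d_{\text{out}}\,r = d_{\text{out}}\,d_{\text{in}}$, and the energy-capture factor is $\|G A\|_F/\|G\|_F = \|G I_{d_{\text{in}}}\|_F/\|G\|_F = 1$, since multiplying by the identity leaves $G$ unchanged (equivalently, the projector $A A^\top = I_{d_{\text{in}}}$ retains all the gradient energy). Substituting into the theorem yields $\mathbb{E}_R[\cos(\widehat{G}^{\mathrm{MEZO}}_0, G)] = \beta_{d_{\text{out}}d_{\text{in}}}\cdot 1 = \beta_{d_{\text{out}}d_{\text{in}}}$, which is the claim; the closed form for $\beta_D$ and its sandwich bounds apply here since $D = d_{\text{out}}d_{\text{in}}\ge 2$ for any nontrivial layer.

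There is no real obstacle in this argument: the only point requiring care is the bookkeeping in matching the two setups — verifying that MeZO's whole-matrix Gaussian perturbation is literally the $A = I_{d_{\text{in}}}$ instance of the AGZO perturbation family (correct shapes, orthonormality of the "basis", and captured-energy fraction equal to one) — so that Theorem~\ref{thm:agzo-cosine} transfers verbatim. Once that identification is made the corollary is immediate, and it also makes transparent the comparison with AGZO: for any activation-informed $A$ with $\|G A\|_F/\|G\|_F$ bounded away from zero, the smaller effective dimension $d_{\text{out}} r \ll d_{\text{out}} d_{\text{in}}$ makes $\beta_{d_{\text{out}} r} \gg \beta_{d_{\text{out}} d_{\text{in}}}$, so AGZO wins whenever the subspace loss $\|G A\|_F/\|G\|_F$ does not offset this dimension gain.
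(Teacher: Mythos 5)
Your proof is correct and follows exactly the route the paper indicates: Corollary~\ref{cor:mezo-cosine} is obtained by specializing Theorem~\ref{thm:agzo-cosine} to $A = I_{d_{\text{in}}}$, checking that the identity is an orthonormal basis with $r = d_{\text{in}}$, that $RA^\top = R$ recovers MeZO's dense perturbation, and that the energy-capture factor becomes $1$. The bookkeeping you carry out is precisely what the paper leaves implicit, so the two arguments coincide.
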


To compare AGZO and MeZO explicitly, we analyze their expected cosine similarity to the true gradient. Consider a layer with gradient factorization $\nabla_{W_\ell} F(W) = Q_\ell H_\ell^\top$. Let the compact SVD of the activation matrix be $H_\ell = U_\ell \Sigma_\ell V_\ell^\top$. We define the interaction matrix between the upstream gradient and activation subspaces as:
\[
B_\ell \;:=\; V_\ell^\top Q_\ell^\top Q_\ell V_\ell \succeq 0.
\]
This matrix captures the energy distribution of the gradient. Specifically, the diagonal entry $B_{\ell, ii}$ quantifies the energy of the upstream gradient projected onto the $i$-th principal component of the activation inputs.

We now state our main result. It demonstrates that unless the gradient signal is adversarially aligned with the smallest singular values of the activation, AGZO provably outperforms MeZO.

\begin{theorem}
\label{thm:agzo-vs-mezo}
Consider a layer where the activation matrix $H_\ell$ is low-rank (i.e., rank $s_\ell < d_{\text{in}}$). Assume the upstream gradient energy is broadly distributed such that the average energy along the top-$r_\ell$ directions is not less than the global average:
\begin{equation}
    \frac{1}{r_\ell}\sum_{i=1}^{r_\ell} B_{\ell,ii}
    \;\ge\;
    \frac{1}{s_\ell}\sum_{i=1}^{s_\ell} B_{\ell,ii}.
    \label{eq:B-isotropic-assump}
\end{equation}
Then, the AGZO provably yields a higher expected cosine similarity to the true gradient than MeZO :
\begin{equation}
    \mathbb{E}_R\big[
        \cos\big(\widehat{G}^{\mathrm{AGZO}}_0, G_\ell\big)
    \big]
    \;>\;
    \mathbb{E}_R\big[
        \cos\big(\widehat{G}^{\mathrm{MEZO}}_0, G_\ell\big)
    \big].
    \label{eq:agzo-better-mezo}
\end{equation}
Furthermore, the performance gap widens as the activation singular values become more heterogeneous (i.e., faster decay).
\end{theorem}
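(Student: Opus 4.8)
The plan is to reduce~\eqref{eq:agzo-better-mezo} to the closed forms supplied by Theorem~\ref{thm:agzo-cosine} and Corollary~\ref{cor:mezo-cosine}, so that the comparison collapses to the single scalar inequality $\beta_{d_{\text{out}}r_\ell}\,\|G_\ell A_\ell\|_F/\|G_\ell\|_F > \beta_{d_{\text{out}}d_{\text{in}}}$, and then to lower-bound the energy ratio $\|G_\ell A_\ell\|_F/\|G_\ell\|_F$ using the gradient factorization~\eqref{eq:grad-matrix-form} and the spectral hypothesis~\eqref{eq:B-isotropic-assump}. First I would write $G_\ell = Q_\ell H_\ell^\top = Q_\ell V_\ell\Sigma_\ell U_\ell^\top$ and take $A_\ell$ to be the matrix of the top-$r_\ell$ left singular vectors of $H_\ell$, so that $U_\ell^\top A_\ell$ is the $s_\ell\times r_\ell$ matrix with $I_{r_\ell}$ in its first $r_\ell$ rows and zeros below. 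Using orthonormality of the columns of $U_\ell$ and $V_\ell$, a direct trace computation then gives $\|G_\ell\|_F^2 = \sum_{i=1}^{s_\ell}\sigma_i^2 B_{\ell,ii}$ and $\|G_\ell A_\ell\|_F^2 = \sum_{i=1}^{r_\ell}\sigma_i^2 B_{\ell,ii}$, where $\sigma_1\ge\cdots\ge\sigma_{s_\ell}>0$ are the singular values of $H_\ell$. Thus the abstract projection factor becomes the $\sigma^2$-weighted ratio $\big(\sum_{i\le r_\ell}\sigma_i^2 B_{\ell,ii}\big)/\big(\sum_{i\le s_\ell}\sigma_i^2 B_{\ell,ii}\big)$, a quantity the hypotheses can act on.

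The core step is to show this weighted ratio is at least $r_\ell/s_\ell$. Since~\eqref{eq:B-isotropic-assump} constrains only the \emph{unweighted} partial averages of the $B_{\ell,ii}$, I would transfer it to the weighted sums through a pivot at index $r_\ell$: set $c:=\sigma_{r_\ell}^2>0$, so monotone decay gives $\sigma_i^2\ge c$ for $i\le r_\ell$ and $\sigma_i^2\le c$ for $i>r_\ell$, and nonnegativity of the $B_{\ell,ii}$ then yields
\[
\begin{aligned}
(s_\ell-r_\ell)\sum_{i=1}^{r_\ell}\sigma_i^2 B_{\ell,ii}
&\ge (s_\ell-r_\ell)\,c\sum_{i=1}^{r_\ell} B_{\ell,ii}\\
&\ge r_\ell\,c\sum_{i=r_\ell+1}^{s_\ell} B_{\ell,ii}\\
&\ge r_\ell\sum_{i=r_\ell+1}^{s_\ell}\sigma_i^2 B_{\ell,ii},
\end{aligned}
\]
where the middle inequality is exactly~\eqref{eq:B-isotropic-assump} after multiplying by $r_\ell s_\ell$ and splitting $\sum_{i\le s_\ell}=\sum_{i\le r_\ell}+\sum_{i>r_\ell}$. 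Rearranging gives $\|G_\ell A_\ell\|_F^2/\|G_\ell\|_F^2\ge r_\ell/s_\ell$.

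Combining with the $\beta$-bounds of Theorem~\ref{thm:agzo-cosine} closes the main inequality:
\[
\begin{aligned}
\mathbb{E}_R\big[\cos\big(\widehat{G}^{\mathrm{AGZO}}_0,G_\ell\big)\big]
&= \beta_{d_{\text{out}}r_\ell}\,\frac{\|G_\ell A_\ell\|_F}{\|G_\ell\|_F}
\ge \beta_{d_{\text{out}}r_\ell}\sqrt{\tfrac{r_\ell}{s_\ell}}\\
&> \sqrt{\tfrac{2}{\pi d_{\text{out}}r_\ell}}\,\sqrt{\tfrac{r_\ell}{s_\ell}}
= \sqrt{\tfrac{2}{\pi d_{\text{out}}s_\ell}},
\end{aligned}
\]
whereas, using $s_\ell\le d_{\text{in}}-1$ (the low-rank hypothesis) and $d_{\text{out}}\ge 1$, hence $d_{\text{out}}d_{\text{in}}-1\ge d_{\text{out}}s_\ell$,
\[
\mathbb{E}_R\big[\cos\big(\widehat{G}^{\mathrm{MEZO}}_0,G_\ell\big)\big]
= \beta_{d_{\text{out}}d_{\text{in}}}
< \sqrt{\tfrac{2}{\pi(d_{\text{out}}d_{\text{in}}-1)}}
\le \sqrt{\tfrac{2}{\pi d_{\text{out}}s_\ell}}.
\]
Chaining the two displays proves~\eqref{eq:agzo-better-mezo}, and strictly, since the $\beta$-bounds are strict at finite dimension. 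For the final assertion, the gap equals $\beta_{d_{\text{out}}r_\ell}\sqrt{\rho}-\beta_{d_{\text{out}}d_{\text{in}}}$ with $\rho=N/M$, $N=\sum_{i\le r_\ell}\sigma_i^2 B_{\ell,ii}$, $M=\sum_{i\le s_\ell}\sigma_i^2 B_{\ell,ii}$, and is increasing in $\rho$; moving an amount $\epsilon$ of squared-singular-value mass from some $\sigma_k^2$ with $k>r_\ell$ to some $\sigma_j^2$ with $j\le r_\ell$ (which sharpens the decay) changes $\rho$ at rate $\big(B_{\ell,jj}(M-N)+N B_{\ell,kk}\big)/M^2\ge 0$. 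Since every majorization-increasing change of the sorted squared singular values decomposes into such transfers (Hardy--Littlewood--P\'olya), faster spectral decay weakly enlarges $\rho$ and hence the gap, while MeZO's value does not depend on the spectrum.

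The step I expect to be the main obstacle is exactly this passage from the unweighted hypothesis~\eqref{eq:B-isotropic-assump} to the $\sigma^2$-weighted ratio: the naive statement ``weighted average $\ge$ unweighted average when the weights are decreasing'' is false in general, and the argument genuinely relies on the split point of the sums coinciding with that of the monotone weights at index $r_\ell$, together with nonnegativity of the diagonal interaction terms $B_{\ell,ii}$. Everything else --- the trace identities, the $\beta$-bound bookkeeping, and the edge cases ($r_\ell\le s_\ell$, $G_\ell\neq 0$, and $d_{\text{out}}r_\ell\ge 2$ so the stated $\beta$-bounds apply) --- is routine.
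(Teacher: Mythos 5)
Your proof matches the paper's approach almost exactly: both reduce the cosine comparison (via Theorem~\ref{thm:agzo-cosine} and Corollary~\ref{cor:mezo-cosine}) to showing the weighted energy ratio $\bigl(\sum_{i\le r_\ell}\sigma_i^2 B_{\ell,ii}\bigr)/\bigl(\sum_{i\le s_\ell}\sigma_i^2 B_{\ell,ii}\bigr)\ge r_\ell/s_\ell$ by a pivot at the sorted singular values around index $r_\ell$, then close with the low-rank condition $s_\ell\le d_{\mathrm{in}}-1$ and the $\beta$-bound $\beta_{d_{\mathrm{out}}d_{\mathrm{in}}}/\beta_{d_{\mathrm{out}}r_\ell}<\sqrt{d_{\mathrm{out}}r_\ell/(d_{\mathrm{out}}d_{\mathrm{in}}-1)}$. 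Your single-pivot chain at $c=\sigma_{r_\ell}^2$ is marginally tidier than the paper's add-and-subtract manipulation using both $\sigma_{r_\ell}^2$ and $\sigma_{r_\ell+1}^2$, and you also supply an explicit majorization-style argument for the final ``widening gap'' remark, which the paper asserts without proof.
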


\begin{proof}
See Appendix~\ref{proof:agzo-vs-mezo}.
\end{proof}

Theorem~\ref{thm:agzo-vs-mezo} formalizes the intuition behind AGZO: for layers where gradients concentrate in low-rank activation subspaces, AGZO produces update directions that are significantly better aligned with the true gradient than dense isotropic baselines.
\section{Experiments}
\label{sec:experiments}

\begin{figure*}[t]
  \centering

  \begin{minipage}[t]{0.30\textwidth}
    \centering
    \includegraphics[height=3.6cm]{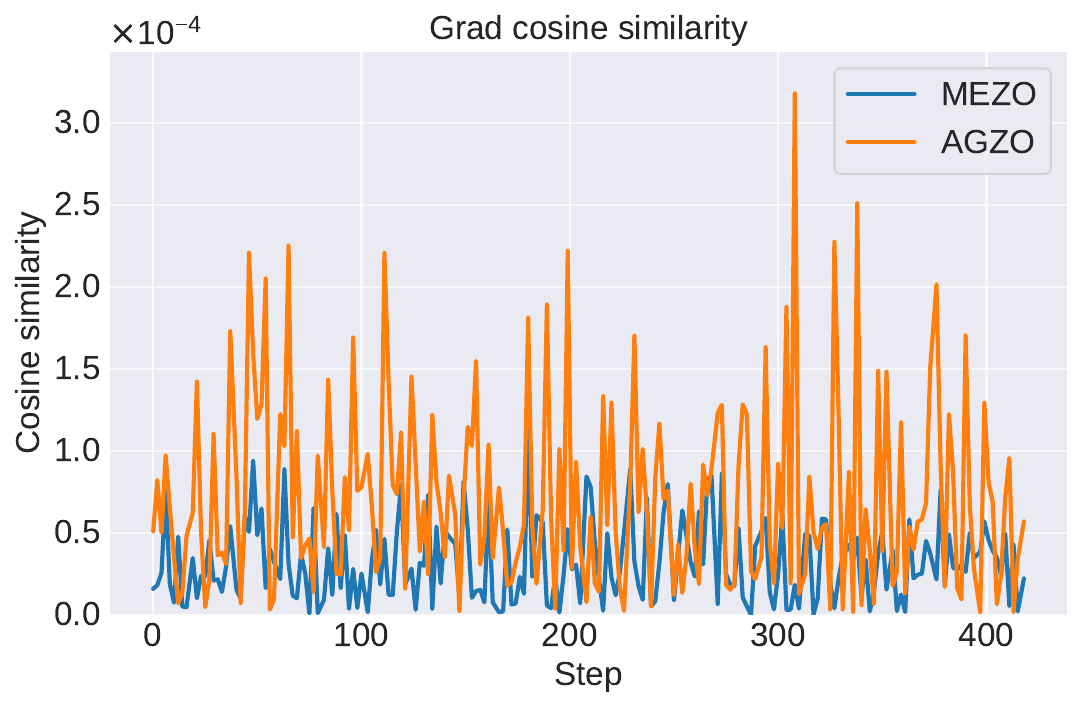}
    \vspace{-6mm}
    \captionof{figure}{Gradient alignment during fine-tuning.}
    \label{fig:grad_cos}
  \end{minipage}
  \hfill
  \begin{minipage}[t]{0.67\textwidth}
    \centering
    \begin{subfigure}[t]{0.48\linewidth}
      \centering
      \includegraphics[height=3.6cm]{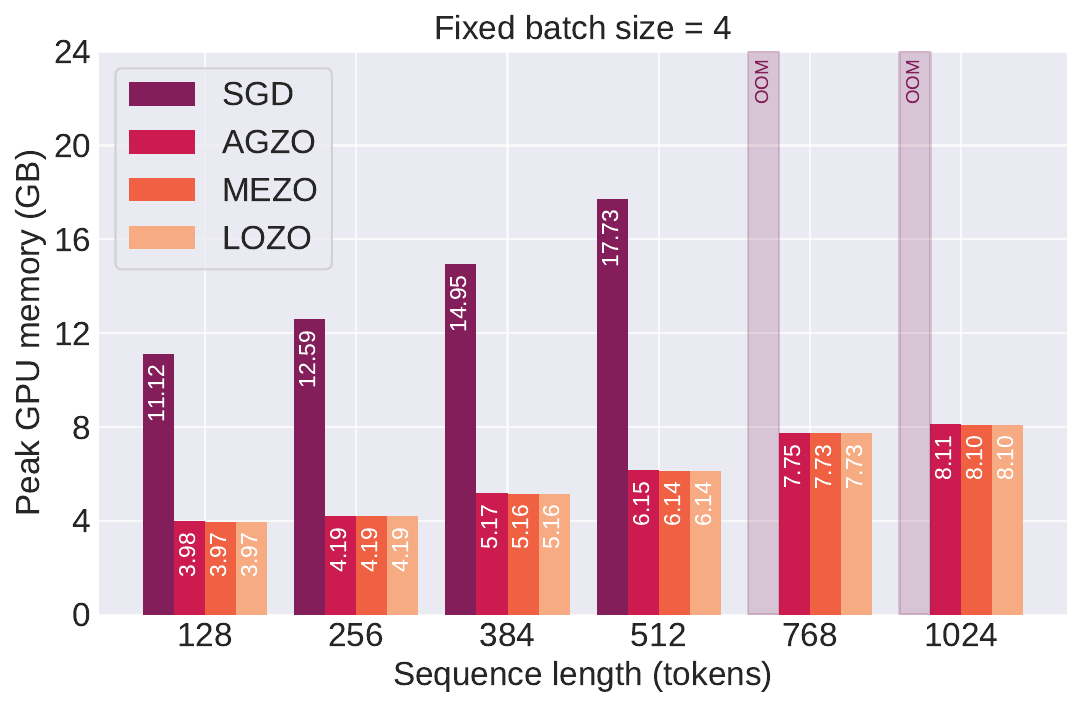}
      \vspace{-6mm}
      \caption{Fixed batch size $=4$, varying sequence length.}
    \end{subfigure}
    \hfill
    \begin{subfigure}[t]{0.48\linewidth}
      \centering
      \includegraphics[height=3.6cm]{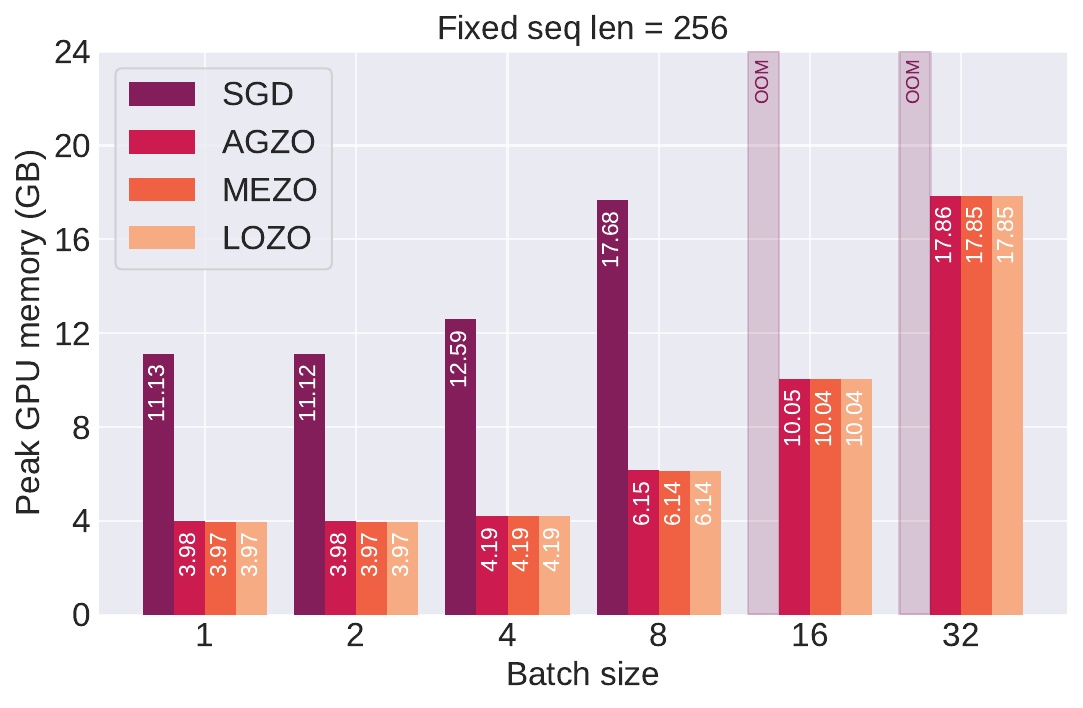}
      \vspace{-6mm}
      \caption{Fixed sequence length $=256$, varying batch size.}
    \end{subfigure}
    \vspace{-1mm}
    \captionof{figure}{Peak GPU memory usage when fine-tuning Qwen3-0.6B on DROP.}
    \label{fig:memory}
    \vspace{-3mm}
  \end{minipage}

\end{figure*}

\begin{table}[t]
  \centering
  \caption{Experiments on Qwen3-0.6b. Bold: ZO's best results.}
  \label{tab:qwen3-0.6b}
  \vspace{-3mm}
  \setlength{\tabcolsep}{4pt}   
  \small
  \begin{tabular}{lcccccc}
    \toprule
    Task & FO & AGZO & MeZO & LOZO & Zero & ICL \\
    \midrule
    SST2    & 0.904 & \textbf{0.877} & 0.858 & 0.870 & 0.540 & 0.510 \\
    COPA    & 0.730 & \textbf{0.740} & 0.680 & 0.690 & 0.570 & 0.620 \\
    CB      & 0.946 & \textbf{0.892} & 0.803 & 0.760 & 0.410 & 0.570 \\
    BoolQ   & 0.768 & 0.724 & \textbf{0.730} & 0.724 & 0.646 & 0.700 \\
    MultiRC & 0.826 & \textbf{0.756} & 0.734 & 0.737 & 0.518 & 0.673 \\
    RTE     & 0.808 & \textbf{0.772} & 0.732 & 0.743 & 0.599 & 0.722 \\
    WiC     & 0.675 & \textbf{0.595} & 0.573 & 0.575 & 0.498 & 0.523 \\
    SQuAD   & 0.871 & \textbf{0.790} & 0.779 & 0.785 & 0.416 & 0.414 \\
    \bottomrule
  \end{tabular}
\end{table}

\begin{table}[t]
  \centering
  \caption{Experiments on Qwen3-4b. }
  \label{tab:qwen3-4b}
  \vspace{-3mm}
  \setlength{\tabcolsep}{6pt}
  \small
  \begin{tabular}{lccccc}
    \toprule
    Task & AGZO & MeZO & LOZO & Zero & ICL \\
    \midrule
    SST2    & \textbf{0.892} & 0.875 & 0.866 &  0.649  &     0.887  \\
    CB      & \textbf{0.875} & 0.857 & 0.857 & 0.375 & 0.821   \\
    BoolQ   & 0.820  & 0.823 & 0.822 &  0.790  & \textbf{0.827}  \\
    MultiRC & \textbf{0.853} & 0.850 & 0.852 &  0.765 & 0.849 \\
    RTE     &  \textbf{0.848} & 0.837 & 0.801&  0.805 & 0.835 \\
    WiC     & \textbf{0.678} & 0.666 & 0.659 & 0.595 & 0.615      \\
    SQuAD   & \textbf{0.876} & 0.870 & 0.869 &  0.583  & 0.555      \\
    \bottomrule
    \vspace{-8mm}
    
  \end{tabular}
\end{table}

\begin{table}[t]
  \centering
  \caption{Experiments on Pangu-1B.}
  \label{tab:pangu-1b-gpu}
  \vspace{-3mm}
  \setlength{\tabcolsep}{4pt}   
  \small
  \begin{tabular}{lcccccc}
    \toprule
    Task & FO & AGZO & MeZO & LOZO & Zero & ICL \\
    \midrule
    SST2    & 0.822 & \textbf{0.778} & 0.764 & 0.720 & 0.568 & 0.717 \\
    COPA    & 0.800 & \textbf{0.770} & 0.750 & \textbf{0.770} & 0.760 & 0.750 \\
    CB      & 0.696 & \textbf{0.732} & \textbf{0.732} & 0.679 & 0.500 & 0.446 \\
    BoolQ   & 0.751 & \textbf{0.730} & 0.699 & 0.696 & 0.695 & 0.735 \\
    RTE     & 0.780 & \textbf{0.736} & 0.729 & 0.697 & 0.581 & 0.682 \\
    WiC     & 0.657 & \textbf{0.575} & 0.567 & 0.563 & 0.466 & 0.511 \\
    \bottomrule
    
  \end{tabular}
  \vspace{-5mm}
\end{table}

This section evaluates AGZO on fine-tuning LLMs under practical memory constraints. The experiments cover multiple tasks, including the SuperGLUE benchmark~\citep{wang2019superglue} and other datasets. We conduct our evaluation on the Qwen3 family (0.6B and 4B scales)~\citep{yang2025qwen3} and the openPangu ~\citep{chen2025pangu} model. In particular, we select \textsc{openPangu-embedded-1B}~\citep{rang2025revealing} (denoted as Pangu-1B) \footnote{https://huggingface.co/FreedomIntelligence/openPangu-Embedded-1B}, a model specifically designed for efficient inference on edge devices (we use the GPU variant in our experiments\footnote{https://modelscope.cn/models/wangrongsheng/openPangu-Embedded-1B-model-GPU}). 

We compare AGZO against established zeroth-order baselines (MeZO, LOZO) as well as non-training baselines (zero-shot prompting and in-context learning, denoted as ICL). Additionally, we include a first-order fine-tuning baseline (FO) using standard backpropagation whenever memory constraints allow. ZO optimizers are fine-tuned for 20,000 steps, whereas the FO baseline is trained for 1,000 steps.

We build two testbeds with different computation platforms. 
\begin{enumerate}
    \item \textbf{Testbed one} is an Ubuntu machine equipped with two NVIDIA RTX~3090 GPUs. 
    \item \textbf{Testbed two} is an EulerOS machine equipped with eight Ascent 910B2 NPUs. 
\end{enumerate}
To ensure a fair comparison, all ZO methods update the full set of trainable parameters and share identical data preprocessing and evaluation pipelines. Regarding hyperparameters, we fix the smoothing parameter $\mu$ to $1\times 10^{-7}$ for all ZO methods on Qwen3 and set $\mu=1\times 10^{-4}$ for Pangu-1B, while the learning rate is determined via grid search based on validation set performance. Further experimental details are provided in Appendix~\ref{appendix_sec:exp}.

\subsection{Alignment to the True Gradient}
\label{sec:exp-cos}

This diagnostic experiment is designed to validate the gradient accuracy analysis in Section~\ref{sec:theory}, which shows that under activation spectral concentration, AGZO achieves a strictly larger expected cosine similarity than MeZO.

To empirically test this prediction in a realistic fine-tuning setting, we fine-tune \textsc{Qwen3-0.6B} with Testbed One on \textsc{SST-2} and track, at training step $t$, the cosine similarity between the ZO estimated gradient and the exact backpropagation gradient computed on the same mini-batch. As shown in \cref{fig:grad_cos}, while the absolute cosine values are small due to the extremely large parameter dimension and the stochasticity of fine-tuning, the persistent gap between AGZO and MeZO is the primary signal of interest and is consistent with the strict separation predicted by \cref{thm:agzo-vs-mezo}.


\subsection{End-to-End Fine-Tuning Performance (Testbed One)}
\label{sec:exp-main}

\textbf{\textsc{Qwen3-0.6B}.}
Table~\ref{tab:qwen3-0.6b} shows that AGZO achieves consistently stronger downstream performance than existing ZO baselines across a broad set of benchmarks. By producing update directions that are better aligned with the true gradient, AGZO enables more effective optimization under the same query budget. As a result, AGZO converges to better solutions and noticeably narrows the performance gap between zeroth-order fine-tuning and first-order training.

\textbf{\textsc{Qwen3-4B}.} Table~\ref{tab:qwen3-4b} reports the results on Qwen3-4B. Under the same hardware setting, FO method runs out of memory, whereas ZO methods remain feasible. AGZO consistently outperforms MeZO and LOZO on this larger scale. AGZO narrow down the gap between memory efficiency and optimization quality, enabling effective fine-tuning of larger models on consumer-grade GPUs.

\textbf{\textsc{Pangu-1B}.}
Table~\ref{tab:pangu-1b-gpu} summarizes the performance of AGZO and various baselines on the Pangu-1B model. Overall, AGZO consistently outperforms existing zeroth-order baselines (MEZO and LOZO) and non-training baselines (zero-shot prompting and in-context learning) on most tasks, demonstrating the effectiveness of our approach in adapting large models with limited gradient information.

\subsection{End-to-End Cross-Platform Fine-Tuning Performance}
We evaluate the cross-platform inference performance, i.e, evaluating the performance on NPU (with Testbed Two) with GPU-trained models (with Testbed One). Table \ref{tab:pangu-1b-npu} presents the performance of AGZO and various baselines on \textsc{openPangu-embedded-1B} across on NPU. Across both GPU and NPU, AGZO consistently achieves the best results among zeroth-order methods and non-training baselines on most downstream tasks. On the NPU, AGZO attains an average score of 0.709, outperforming other ZO baselines on tasks such as SST2, COPA, BoolQ, RTE, and WiC. The slightly lower performance on the NPU compared to the GPU may be attributed to subtle differences in numerical precision, memory layout, or low-level kernel implementations, which can affect the propagation of small perturbations used in zeroth-order optimization. 
\begin{table}[H]
  \centering
  \caption{Experiments on Pangu-1B(NPU). The best results are shown in bold except for FO.}
  \label{tab:pangu-1b-npu}
  \vspace{0.3em}
  \setlength{\tabcolsep}{4pt}   
  \small
  \begin{tabular}{lcccccc}
    \toprule
    Task & FO & AGZO & MeZO & LOZO & Zero & ICL \\
    \midrule
    SST2    & 0.821 & 0.765 & \textbf{0.766} & 0.718 & 0.571 & 0.710 \\
    COPA    & 0.800 & \textbf{0.770} & 0.740 & 0.720 & 0.760 & 0.740 \\
    CB      & 0.696 & 0.696 & \textbf{0.732} & 0.643 & 0.482 & 0.446 \\
    BoolQ   & 0.752 & \textbf{0.728} & 0.697 & 0.694 & 0.696 & 0.731 \\
    RTE     & 0.780 & \textbf{0.729} & \textbf{0.729} & 0.682 & 0.578 & 0.682 \\
    WiC     & 0.657 & \textbf{0.567} & 0.552 & 0.542 & 0.469 & 0.495 \\
    Avg.    & 0.738 & \textbf{0.709} & 0.703 & 0.667 & 0.593 & 0.636 \\
    \bottomrule
  \end{tabular}
\end{table}

\paragraph{}



\subsection{Peak GPU Memory Footprint}
\label{sec:exp-memory}

As discussed in Section~\ref{sec:agzo-memory}, AGZO only stores the activation-informed basis for each linear layer, without maintaining
additional activation state beyond standard ZO state. Since this basis is tiny compared to the weight matrix, AGZO incurs nearly the same peak GPU memory as MEZO.

We empirically validate this memory analysis by measuring the \emph{peak} GPU memory footprint when fine-tuning Qwen3-0.6B on DROP task with Testbed one. We sweep the two primary drivers of training-time memory: sequence length and batch size. 
In Figure~\ref{fig:memory}(a), we fix the batch size to 4 and increase the sequence length. FO exhibits rapidly growing memory usage and becomes out-of-memory (OOM) at long contexts, whereas ZO methods remain substantially lower and continue to run. In Figure~\ref{fig:memory}(b), we fix the sequence length to 256 and increase the batch size. FO again hits OOM at moderate batch sizes, while ZO methods remain feasible for significantly larger batches.

Importantly, AGZO matches the memory profile of other forward-only ZO baselines, indicating that the activation-guided subspace construction introduces negligible additional memory overhead.

\section{Conclusions}
\label{sec:conclusions}

In this paper, we propose AGZO, a zeroth-order fine-tuning method that leverages per-iteration activation structure to construct low-rank, activation-guided perturbations for linear layers. By extracting compact activation subspaces on the fly via lightweight power iteration, AGZO concentrates ZO updates on directions that are intrinsically coupled to backpropagation signals, all without storing activations across iterations. Theoretically, we prove that under activation spectral concentration, the AGZO update direction achieves a strictly larger expected cosine similarity to the true gradient than prior isotropic ZO baselines. Experiments on Qwen3 and Pangu models support this analysis, showing that AGZO generally improves over the considered ZO baselines in downstream performance and relative directional fidelity, while maintaining a peak memory footprint comparable to standard forward-only ZO methods.

\section*{Impact Statement}
This paper presents work whose goal is to advance the field of Machine
Learning. There are many potential societal consequences of our work, none
which we feel must be specifically highlighted here.

\section*{Acknowledgements}
This work is supported in part by funding from CUHK (4937007, 4937008, 5501329, 5501517, 8601129) and the Key Project for Technological Innovation and Industrialization in the Software Industry of Fujian Province (Project ``Research, Development, and Industrialization of Key Technologies for AI Agent-Driven Production Management in the Electronic Manufacturing Industry'').

\bibliography{icml2026}
\bibliographystyle{icml2026}


\newpage
\appendix
\onecolumn
\clearpage

\onecolumn
\appendix

\begin{center}
    {\Large \bfseries Appendix}   
\end{center}

\section{Proofs}
\label{sec:appendix}

\subsection{Subspace/Gaussian Smoothing Identities}\label{appendix_sec:subspace-smoothing}

This section places AGZO and MEZO under one oracle view: each estimator
computes (up to $O(\mu^2)$) the gradient of a \emph{smoothed} objective. For AGZO, the smoothing kernel is restricted to a row-subspace; for MEZO it is isotropic.

\subsubsection{Smoothing operators}\label{subsec:smoothing-ops}

Fix a step radius $\mu>0$. For a given layer $l$, let $A_l\in\mathbb{R}^{d_{in}\times r}$ have
orthonormal columns ($A_l^\top A_l=I_{r}$), and let $R_l\in\mathbb{R}^{d_{out}\times r}$
have i.i.d.\ $\mathcal{N}(0,1)$ entries. Define the rank-$r$ matrix direction
$\Delta_l=R_l A_l^\top$ and the block direction $\Delta=\{\Delta_l\}_{l=1}^L$.

\paragraph{Per-batch and population smoothings.}
For a fixed batch $B$, define the \emph{subspace smoothing} of $f(\cdot,B)$:
\begin{equation}\label{eq:subspace-smooth-batch}
f_{\mu,A}(W,B)\;:=\;\mathbb{E}_{R}\Big[f\big(W+\mu\Delta,B\big)\Big],
\qquad
\Delta_l=R_l A_l^\top,\ \ R:=\{R_l\}_{l=1}^L.
\end{equation}
Averaging over batches yields the population version
\begin{equation}\label{eq:subspace-smooth-pop}
F_{\mu,A}(W)\;:
=\;\mathbb{E}_{R}\,\Big[F(W+\mu \Delta)\Big]\;
=\;\mathbb{E}_{B,R}\Big[f\big(W+\mu\Delta,B\big)\Big].
\end{equation}

For MEZO, let $U_l\in\mathbb{R}^{d_{out}\times d_{in}}$ have i.i.d.\ $\mathcal{N}(0,1)$ entries and set
$\Delta_l=U_l$ (full-dimensional). Define
\begin{equation}\label{eq:gauss-smooth}
f_{\mu,\mathrm{iso}}(W,B)\;:=\;\mathbb{E}_{U}\Big[f\big(W+\mu\Delta,B\big)\Big],
\qquad
F_{\mu,\mathrm{iso}}(W)\;:=\;\mathbb{E}_{B,U}\Big[f\big(W+\mu\Delta,B\big)\Big].
\end{equation}



\subsubsection{Subspace-smoothing gradient identity for AGZO}\label{subsec:identity-subspace}

\begin{lemma}[Gaussian moment identity]\label{lem:gauss-moment}
Let $R\in\mathbb{R}^{d_{out}\times r}$ have i.i.d.\ $\mathcal{N}(0,1)$ entries and
let $M\in\mathbb{R}^{d_{out}\times r}$ be deterministic. Then
\[
\mathbb{E}\,\big[\langle M,R\rangle\,R\big]\;=\;M,
\qquad
\text{where}\ \ \langle M,R\rangle:=\mathrm{tr}(M^\top R).
\]
Similarly, if $U\in\mathbb{R}^{d_{out}\times d_{in}}$ is i.i.d.\ standard Gaussian and $G\in\mathbb{R}^{d_{out}\times d_{in}}$,
then $\mathbb{E}\,[\langle G,U\rangle\,U]=G$.
\end{lemma}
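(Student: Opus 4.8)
The plan is to verify the identity entrywise, reducing it to the second-moment structure of i.i.d.\ standard Gaussians. First I would expand the Frobenius inner product as a double sum over matrix entries, $\langle M,R\rangle=\sum_{k=1}^{d_{out}}\sum_{l=1}^{r}M_{kl}R_{kl}$, so that for each index pair $(i,j)$ the corresponding entry of the random matrix $\langle M,R\rangle\,R$ is $\big(\sum_{k,l}M_{kl}R_{kl}\big)R_{ij}$. Since the sum is finite and $M$ is deterministic, I can exchange expectation and summation to obtain $\mathbb{E}\big[(\langle M,R\rangle R)_{ij}\big]=\sum_{k,l}M_{kl}\,\mathbb{E}[R_{kl}R_{ij}]$.

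The crux is the elementary moment identity $\mathbb{E}[R_{kl}R_{ij}]=\delta_{ki}\,\delta_{lj}$: independence of the entries kills the cross terms, and each diagonal term contributes $\mathbb{E}[R_{ij}^2]=1$. Plugging this in collapses the double sum to the single surviving term $M_{ij}$, so $\mathbb{E}[\langle M,R\rangle R]=M$ entrywise, hence as matrices. The companion statement for the isotropic case, $\mathbb{E}[\langle G,U\rangle U]=G$ with $U$ an i.i.d.\ standard Gaussian matrix of shape $d_{out}\times d_{in}$ and $G$ deterministic of the same shape, is the identical computation with the index range $l=1,\dots,r$ replaced by $l=1,\dots,d_{in}$, so it follows without change.

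I do not expect any genuine obstacle here; the only bookkeeping points are to keep the index ranges consistent and to note that the shapes of $M$ and $R$ (resp.\ $G$ and $U$) must agree for the inner product to be well defined. This lemma is the workhorse behind the unbiasedness identities invoked in Proposition~\ref{prop:subspace-smoothing} and Theorem~\ref{thm:agzo-cosine}: it says that averaging the outer-product form $\langle M,R\rangle R$ over the Gaussian factor $R$ reproduces $M$ exactly, which is precisely what turns the finite-difference construction into the gradient of the smoothed objective in expectation.
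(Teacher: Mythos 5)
Your proof is correct and mirrors the paper's own argument almost line for line: expand $\langle M,R\rangle R$ entrywise, exchange expectation and the finite sum, invoke $\mathbb{E}[R_{kl}R_{ij}]=\delta_{ki}\delta_{jl}$ from independence and unit variance, and observe that the double sum collapses to $M_{ij}$. The paper additionally notes a one-line vectorized view ($\mathbb{E}[(x^\top z)z]=\mathbb{E}[zz^\top]x=x$), but that is cosmetic; your approach is the same.
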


\begin{proof}
We proceed entrywise. For $a\in\{1,\dots,d_{out}\}$ and $b\in\{1,\dots,r\}$,
\[
\big[\mathbb{E}\,\big[\langle M,R\rangle\,R\big]\big]_{ab}
=
\mathbb{E}\Big[\Big(\sum_{i=1}^{d_{out}}\sum_{j=1}^r M_{ij}R_{ij}\Big) R_{ab}\Big]
=
\sum_{i=1}^{d_{out}}\sum_{j=1}^r M_{ij}\,\mathbb{E}\big[R_{ij}R_{ab}\big].
\]

Because $R$ has i.i.d.\ $\mathcal N(0,1)$ entries:
(i) $\mathbb{E}[R_{ij}]=0$ and $\mathbb{E}[R_{ij}^2]=\mathrm{Var}(R_{ij})=1$;
(ii) if $(i,j)\neq (a,b)$ then $R_{ij}$ and $R_{ab}$ are independent, hence
$\mathbb{E}[R_{ij}R_{ab}]=\mathbb{E}[R_{ij}]\,\mathbb{E}[R_{ab}]=0$.
Combining, $\mathbb{E}[R_{ij}R_{ab}]=1$ when $(i,j)=(a,b)$ and $0$ otherwise, we have:
\[
\mathbb{E}[R_{ij}R_{ab}] \;=\; \delta_{ia}\,\delta_{jb}
\;=\; \mathbf{1}\{i=a,\ j=b\}.
\]

\medskip
Using the claim, the double sum collapses to the single surviving term $M_{ab}$:
\[
\big[\mathbb{E}\,\big[\langle M,R\rangle\,R\big]\big]_{ab} = M_{ab}.
\]
Since this holds for every $(a,b)$, we have $\mathbb{E}\,[\langle M,R\rangle\,R]=M$.

The isotropic version is identical: for $U\in\mathbb{R}^{d_{out}\times d_{in}}$ i.i.d.\ $\mathcal N(0,1)$
and deterministic $G$,
\[
\big[\mathbb{E}\,[\langle G,U\rangle\,U]\big]_{ab}
=
\sum_{i=1}^{d_{out}}\sum_{j=1}^{d_{in}} G_{ij}\,\mathbb{E}[U_{ij}U_{ab}]
=
G_{ab},
\]
so $\mathbb{E}\,[\langle G,U\rangle\,U]=G$.

\medskip
\noindent\textit{Vectorized view.}
Let $z=\mathrm{vec}(R)\sim\mathcal N(0,I_{d_{out}r})$ and $x=\mathrm{vec}(M)$. Then
$\mathbb{E}[(x^\top z)z] = \mathbb{E}[z z^\top]x = Ix = x$, which is the same identity reshaped to matrices.
\end{proof}

\begin{lemma}[Stein identity for matrix Gaussians]\label{lem:stein-matrix}
Let $R\in\mathbb{R}^{d_{out}\times r}$ have i.i.d.\ $\mathcal N(0,1)$ entries with joint density
$p(R)=\prod_{a=1}^{d_{out}}\prod_{b=1}^r \phi(R_{ab})$, where $\phi(z)=(2\pi)^{-1/2}e^{-z^2/2}$.
Let $h:\mathbb{R}^{d_{out}r}\to\mathbb{R}$ be $C^1$ with
$\mathbb{E}\,|h(R)|<\infty$ and $\mathbb{E}\,\|\nabla_{R}h(R)\|_F<\infty$.
Assume moreover the following \emph{sub-Gaussian growth} condition:

\medskip
(A) For each coordinate $(a,b)$, writing
$U:=\{R_{ij}:(i,j)\neq(a,b)\}$ and $g(z;U):=h(R^{(-ab)},z)$,
there exist $\alpha\in(0,\tfrac12)$ and a nonnegative random variable $C(U)$ with $\mathbb{E}C(U)<\infty$
such that, for all $z\in\mathbb{R}$,
\[
\big|g(z;U)\big|+\big|\partial g(z;U)/\partial z\big| \;\le\; C(U)\,e^{\alpha z^2}.
\]
Then
\begin{equation}\label{eq:stein-matrix}
\mathbb{E}\big[h(R)\,R\big]\;=\;\mathbb{E}\big[\nabla_{R} h(R)\big],
\end{equation}
where the expectation is taken entrywise and $\nabla_R h$ is the matrix of partial derivatives of $h$
with respect to the entries of $R$.
\end{lemma}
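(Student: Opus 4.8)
The plan is to prove Lemma~\ref{lem:stein-matrix} by reducing the matrix-valued identity to the classical one-dimensional Gaussian integration-by-parts (Stein) identity, applied coordinate by coordinate, and then verifying that the sub-Gaussian growth condition (A) is exactly what is needed to justify the boundary terms vanishing and to invoke Fubini. First I would fix an arbitrary index $(a,b)$ and show that
\[
\mathbb{E}\big[h(R)\,R_{ab}\big] \;=\; \mathbb{E}\Big[\frac{\partial h}{\partial R_{ab}}(R)\Big],
\]
since the full matrix identity \eqref{eq:stein-matrix} is just the collection of these $d_{\text{out}}r$ scalar identities. To do this I would condition on $U:=\{R_{ij}:(i,j)\neq(a,b)\}$ and write, with $g(z;U):=h(R^{(-ab)},z)$ as in the statement,
\[
\mathbb{E}\big[h(R)\,R_{ab}\,\big|\,U\big]
\;=\;\int_{\mathbb{R}} z\,g(z;U)\,\phi(z)\,dz .
\]

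The core computation is the one-dimensional Stein identity: using $z\,\phi(z) = -\phi'(z)$ and integrating by parts,
\[
\int_{\mathbb{R}} z\,g(z;U)\,\phi(z)\,dz
\;=\;-\int_{\mathbb{R}} g(z;U)\,\phi'(z)\,dz
\;=\;\Big[-g(z;U)\phi(z)\Big]_{-\infty}^{\infty}
+\int_{\mathbb{R}} \partial_z g(z;U)\,\phi(z)\,dz .
\]
Here condition (A) does the essential work: since $|g(z;U)|\le C(U)e^{\alpha z^2}$ with $\alpha<\tfrac12$, and $\phi(z)$ decays like $e^{-z^2/2}$, the product $g(z;U)\phi(z)$ is bounded by $C(U)(2\pi)^{-1/2}e^{-(1/2-\alpha)z^2}\to 0$ as $|z|\to\infty$, so the boundary term vanishes for every fixed $U$ (outside a null set). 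The same growth bound on $|g|$ and $|\partial_z g|$, combined with $\mathbb{E}\,C(U)<\infty$, guarantees that both $\int |z\,g(z;U)|\phi(z)\,dz$ and $\int |\partial_z g(z;U)|\phi(z)\,dz$ are integrable in $U$, which lets me take the expectation over $U$ of both sides and apply the tower property to conclude
\[
\mathbb{E}\big[h(R)\,R_{ab}\big]
\;=\;\mathbb{E}\Big[\,\mathbb{E}\big[\partial_z g(z;U)\big|_{z=R_{ab}}\,\big|\,U\big]\Big]
\;=\;\mathbb{E}\Big[\frac{\partial h}{\partial R_{ab}}(R)\Big].
\]
Collecting this over all $(a,b)$ gives \eqref{eq:stein-matrix} entrywise, hence as matrices.

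The main obstacle — and the only genuinely delicate point — is the rigorous handling of the boundary term and the interchange of the integral with differentiation/expectation under the stated hypotheses; everything else is bookkeeping. I would be careful to note that the integrability hypotheses $\mathbb{E}|h(R)|<\infty$ and $\mathbb{E}\|\nabla_R h(R)\|_F<\infty$ are what make the left- and right-hand sides of \eqref{eq:stein-matrix} well-defined in the first place, while condition (A) is the stronger, uniform-in-$z$ control needed to actually execute the integration by parts fiberwise and then re-integrate over $U$ via Fubini–Tonelli. As a final remark I would observe that Lemma~\ref{lem:gauss-moment} is the special case $h(R)=\langle M,R\rangle$, for which $\partial h/\partial R_{ab}=M_{ab}$, so \eqref{eq:stein-matrix} immediately recovers $\mathbb{E}[\langle M,R\rangle R]=M$; this consistency check also confirms the orientation of the identity.
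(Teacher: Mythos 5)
Your proposal is correct and follows essentially the same approach as the paper's proof: reduce to a scalar entrywise identity, condition on the remaining entries $U$, apply one-dimensional Gaussian integration by parts, and use condition (A) to kill the boundary term and justify the interchange via Fubini/tower property. The only cosmetic difference is that the paper first performs the integration by parts on a truncated interval $[-M,M]$ and then passes $M\to\infty$ by dominated convergence, whereas you integrate by parts directly on $\mathbb{R}$; both are justified by (A), so the proofs are substantively identical.
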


\begin{proof}
Fix $(a,b)$ and let $Z:=R_{ab}\sim\mathcal N(0,1)$, independent of $U:=\{R_{ij}:(i,j)\neq(a,b)\}$.
Define $g(z;U):=h(R^{(-ab)},z)$ with $R_l^{(-ab)}$ the matrix of fixed other entries, we have $h(R)=g(Z;U)$.

Conditioning on $U$ and using independence of $Z$ and $U$,
\[
\mathbb{E}\big[h(R)\,R_{ab}\big]
=\mathbb{E}_U\Big[\mathbb{E}_Z\big[g(Z;U)\,Z\big]\Big]
=\mathbb{E}_U\!\left[\int_{\mathbb{R}} g(z;U)\,z\,\phi(z)\,dz\right].
\]

For fixed $U$ and $M>0$,
\[
\int_{-M}^{M} g(z;U)\,z\,\phi(z)\,dz
= -\int_{-M}^{M} g(z;U)\,\phi'(z)\,dz
= -\big[g(z;U)\phi(z)\big]_{-M}^{M} + \int_{-M}^{M} g'(z;U)\,\phi(z)\,dz,
\]
since $\phi'(z)=-z\phi(z)$. By (A),
$|g(z;U)\phi(z)|\le C(U)(2\pi)^{-1/2}e^{-(\frac12-\alpha)z^2}\to 0$ as $|z|\to\infty$,
so the boundary term vanishes as $M\to\infty$. Dominated convergence (dominated by
$C(U)e^{-(\frac12-\alpha)z^2}$) yields
\[
\int_{\mathbb{R}} g(z;U)\,z\,\phi(z)\,dz
=\int_{\mathbb{R}} g'(z;U)\,\phi(z)\,dz
=\mathbb{E}_Z\big[g'(Z;U)\big].
\]

By definition of $g$, $g'(z;U)=\partial h(R)/\partial R_{ab}$ evaluated at the matrix with
entry $(a,b)$ equal to $z$ and others fixed. Thus
\[
\mathbb{E}\big[h(R)\,R_{ab}\big]=\mathbb{E}\!\left[\frac{\partial h}{\partial R_{ab}}(R)\right].
\]
Stacking over all $(a,b)$ gives \eqref{eq:stein-matrix}.
\end{proof}

\begin{theorem}[Restate of Proposition\ref{prop:subspace-smoothing} and \ref{prop:smoothing-bias}]\label{thm:subspace-identity}
Fix $W$ and a batch $B$. Let $A_l\in\mathbb{R}^{d_{in}\times r}$ have orthonormal columns, and let
$R_l\in\mathbb{R}^{d_{out}\times r}$ have i.i.d.\ $\mathcal N(0,1)$ entries, independently across $l$.
Define $\Delta_l:=R_lA_l^\top$, $\Delta:=\{\Delta_l\}_{l=1}^L$,
\[
f_{\mu,A}(W,B):=\mathbb{E}_{R}\,f(W+\mu\Delta,B),
\qquad
\phi(W,\Delta;B):=\frac{f(W+\mu\Delta,B)-f(W,B)}{\mu}.
\]
Assume $L$-smoothness of $f(\cdot,B)$. Then for each layer $l$:

\begin{enumerate}
\item[(a)] 
\begin{equation}\label{eq:one-sided-identity}
\nabla_{W_l} f_{\mu,A}(W,B)\,A_lA_l^\top
\;=\;
\frac{1}{\mu}\,\mathbb{E}_{R}\!\big[f(W+\mu\Delta,B)\,R_lA_l^\top\big].
\end{equation}

\item[(b)]  Using $\mathbb E_{R}=0$,
\begin{equation}\label{eq:twopoint-identity}
\nabla_{W_l} f_{\mu,A}(W,B)\,A_lA_l^\top
\;=\;
\mathbb{E}_{R}\!\Big[\phi(W,\Delta;B,B)\,R_lA_l^\top\Big].
\end{equation}

\item[(c)]  There exist absolute
constants $c_l<\infty$ (depending only on Gaussian moments and layer shapes) such that
\begin{equation}\label{eq:proj-bias}
\Big\|
\nabla_{W_l} f_{\mu,A}(W,B)\,A_lA_l^\top
-
\big(\nabla_{W_l} f(W,B)\big)\,A_lA_l^\top
\Big\|_F
\;\le\;  c_l\,L\,\mu, 
\end{equation}
Averaging over $B$ gives the population versions with $f\to F$ on both sides.
\end{enumerate}
\end{theorem}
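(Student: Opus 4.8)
The plan is to derive all three parts from the matrix Stein identity (Lemma~\ref{lem:stein-matrix}) applied to the map $R_l\mapsto f(W+\mu\Delta,B)$, supplemented by $L$-smoothness for the quantitative bound in part (c). First I would compute the Jacobian of the reparametrization $R_l\mapsto\Delta_l=R_lA_l^\top$: since $(R_lA_l^\top)_{cd}=\sum_k (R_l)_{ck}(A_l)_{dk}$, one has $\partial(R_lA_l^\top)_{cd}/\partial(R_l)_{ab}=\delta_{ca}(A_l)_{db}$, so by the chain rule $\nabla_{R_l}\,f(W+\mu\Delta,B)=\mu\,\nabla_{W_l}f(W+\mu\Delta,B)\,A_l$ (the other blocks $R_{l'}$ enter $f$ only through $\Delta_{l'}$ and contribute nothing to the $R_l$-gradient). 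Conditioning on the remaining blocks, applying Lemma~\ref{lem:stein-matrix} to $h(R)=f(W+\mu\Delta,B)$, and averaging back gives $\mathbb{E}_R[f(W+\mu\Delta,B)\,R_l]=\mu\,\mathbb{E}_R[\nabla_{W_l}f(W+\mu\Delta,B)\,A_l]$. Right-multiplying by $A_l^\top$ and interchanging differentiation with the expectation (dominated convergence, justified below) so that $\mathbb{E}_R[\nabla_{W_l}f(W+\mu\Delta,B)]=\nabla_{W_l}f_{\mu,A}(W,B)$, then dividing by $\mu$, yields \eqref{eq:one-sided-identity}, which is part (a).

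Part (b) is then immediate: $R_l$ has mean zero and is independent of the other blocks, so $\mathbb{E}_R[f(W,B)\,R_lA_l^\top]=f(W,B)\,\mathbb{E}_R[R_l]\,A_l^\top=0$; subtracting $f(W,B)/\mu$ times this vanishing term converts the one-point expression in (a) into the two-point finite-difference form $\mathbb{E}_R[\phi(W,\Delta;B)\,R_lA_l^\top]$. Since $\Delta_l=R_lA_l^\top$, this is exactly $\mathbb{E}_R[\widehat{\nabla}^{\mathrm{AGZO}}_{W_l}(W;B)]$, establishing Proposition~\ref{prop:subspace-smoothing}.

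For part (c), I would write $\nabla_{W_l}f_{\mu,A}(W,B)-\nabla_{W_l}f(W,B)=\mathbb{E}_R\big[\nabla_{W_l}f(W+\mu\Delta,B)-\nabla_{W_l}f(W,B)\big]$ and bound the integrand using $L$-smoothness: $\|\nabla f(W+\mu\Delta,B)-\nabla f(W,B)\|_F\le L\mu\|\Delta\|_F$, and restricting to the $l$-th block can only decrease this. Jensen then gives $\mathbb{E}_R\|\Delta\|_F\le(\mathbb{E}_R\|\Delta\|_F^2)^{1/2}=\big(\sum_{l'}d_{\mathrm{out},l'}r_{l'}\big)^{1/2}$, where I use $\|R_{l'}A_{l'}^\top\|_F=\|R_{l'}\|_F$ for $A_{l'}$ with orthonormal columns and $\mathbb{E}\|R_{l'}\|_F^2=d_{\mathrm{out},l'}r_{l'}$. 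Finally, right-multiplication by the orthogonal projector $A_lA_l^\top$ is non-expansive in Frobenius norm, which gives \eqref{eq:proj-bias} with $c_l=\big(\sum_{l'}d_{\mathrm{out},l'}r_{l'}\big)^{1/2}$. Averaging over $B$ and applying Jensen/the triangle inequality produces the population statement with $f$ replaced by $F$.

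The main obstacle is the measure-theoretic bookkeeping, namely verifying the hypotheses of the Stein identity and of differentiation under the integral sign: one must check that $R_l\mapsto f(W+\mu\Delta,B)$ and its $R_l$-derivative satisfy the sub-Gaussian growth condition (A) of Lemma~\ref{lem:stein-matrix}. This holds because $L$-smoothness forces $f$ to grow at most quadratically and $\nabla f$ at most linearly in $\|\Delta\|_F$, hence polynomially in the entries of $R$, and every polynomial is dominated by $e^{\alpha z^2}$ for each $\alpha\in(0,\tfrac12)$; the dominating random variable $C(U)$ is a polynomial in the remaining entries with finite Gaussian moments, and the same envelope validates the dominated-convergence interchange of $\nabla_{W_l}$ and $\mathbb{E}_R$. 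Everything else reduces to routine manipulation of the block structure and Frobenius norms.
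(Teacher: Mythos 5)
Your proof of parts (a) and (b) follows essentially the same route as the paper: reparametrize via the chain rule to obtain $\nabla_{R_l}h=\mu\,\nabla_{W_l}f(W+\mu\Delta,B)\,A_l$, apply the matrix Stein identity (Lemma~\ref{lem:stein-matrix}), right-multiply by $A_l^\top$, identify $\nabla_{W_l}f_{\mu,A}=\mathbb{E}_R[\nabla_{W_l}f(W+\mu\Delta,B)]$ by differentiation under the integral, and then pass from the one-point to the two-point form using $\mathbb{E}[R_l]=0$. Your check that $L$-smoothness forces at most quadratic growth of $f$, hence satisfies Condition (A), matches the paper's brief justification.

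For part (c) you take a genuinely different and somewhat cleaner path. The paper Taylor-expands $f(W+\mu\Delta,B)$ to first order, substitutes into the two-point identity of (b), re-derives the leading term $\nabla_{W_l}f(W,B)A_lA_l^\top$ by decomposing $\langle\nabla f,\Delta\rangle$ layerwise and invoking Lemma~\ref{lem:gauss-moment}, and then controls the remainder through the mixed third moment $\frac{L}{2}\mu\,\mathbb{E}\!\left[\|\Delta\|_F^2\,\|R_l\|_F\right]$. You instead compare $\nabla_{W_l}f_{\mu,A}(W,B)=\mathbb{E}_R[\nabla_{W_l}f(W+\mu\Delta,B)]$ to $\nabla_{W_l}f(W,B)$ directly, bound the integrand by $L\mu\|\Delta\|_F$ via the Lipschitz-gradient property, apply Jensen to get $\mathbb{E}_R\|\Delta\|_F\le\bigl(\sum_{l'}d_{\mathrm{out},l'}r_{l'}\bigr)^{1/2}$ (using $\|R_{l'}A_{l'}^\top\|_F=\|R_{l'}\|_F$), and finish by noting that right-multiplication by the orthogonal projector $A_lA_l^\top$ is non-expansive. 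This skips re-establishing the first-order term and yields an explicit constant $c_l=\bigl(\sum_{l'}d_{\mathrm{out},l'}r_{l'}\bigr)^{1/2}$ that involves only second Gaussian moments. Both arguments are correct; yours is more economical and more explicit, at the cost of not illustrating as directly how the Gaussian moment identity recovers the projected gradient.
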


\begin{proof}
\textbf{(a)} Let $h(R):=f(W+\mu\Delta,B)$ with $\Delta_l=R_lA_l^\top$. Varying $R_l$ only, the differential is
\[
dh = \big\langle \nabla_{W_l} f(W+\mu\Delta,B),\, \mu\, dR_l A_l^\top\big\rangle
     = \mu\,\big\langle \nabla_{W_l} f(W+\mu\Delta,B)A_l,\, dR_l\big\rangle,
\]
hence
\[
\nabla_{R_l} h(R)=\mu\,\nabla_{W_l} f(W+\mu\Delta,B)\,A_l.
\]

Note that the L-Lipschitz gradient assumption implies at most quadratic growth of $f$, which satisfies Condition (A) required by Lemma~\ref{lem:stein-matrix}. Hence we have,
\[
\mathbb{E}_{R_l}[h(R)\,R_l] = \mathbb{E}_{R_l}[\nabla_{R_l} h(R)]
= \mu\,\mathbb{E}_{R_l}\big[\nabla_{W_l} f(W+\mu\Delta,B)\,A_l\big].
\]
Right-multiplying by $A_l^\top$ and dividing by $\mu$,
\[
\mathbb{E}_{R}\big[\nabla_{W_l} f(W+\mu\Delta,B)\,A_lA_l^\top\big]
=
\frac{1}{\mu}\,\mathbb{E}_{R}\big[f(W+\mu\Delta,B)\,R_lA_l^\top\big].
\]

By \eqref{eq:subspace-smooth-batch}, we have:
\[
\nabla_{W_l} f_{\mu,A}(W,B) = \mathbb{E}_{R}\big[\nabla_{W_l} f(W+\mu\Delta,B)\big].
\]

Hence,
\[
\nabla_{W_l} f_{\mu,A}(W,B)A_lA_l^T
=
\frac{1}{\mu}\,\mathbb{E}_{R}\big[f(W+\mu\Delta,B)\,R_lA_l^\top\big].
\]
which is exactly \eqref{eq:one-sided-identity}.

\textbf{(b)} By $\mathbb E_R= 0 $,
\[
\frac{1}{\mu}\,\mathbb{E}_{R}[f(W+\mu\Delta,B)\,R_lA_l^\top]
= \mathbb{E}_{R}\Big[\frac{f(W+\mu\Delta,B)-f(W,B)}{\mu}\,R_lA_l^\top\Big],
\]
which yields \eqref{eq:twopoint-identity}. 

\textbf{(c)} By the $L$-smooth descent lemma, for $h=\pm\mu\Delta$,
\[
f(W{+}h,B)=f(W,B)+\langle \nabla f(W,B),h\rangle+R_h,
\quad
|R_h|\le
\frac{L}{2}\|h\|_F^2.
\]
Hence
\[
\phi(W,\Delta;B,B)=\langle \nabla f(W,B),\Delta\rangle + r_\mu,
\qquad
|r_\mu|\le
\frac{L}{2}\,\mu\,\|\Delta\|_F^2.
\]
Plugging into \eqref{eq:twopoint-identity},
\[
\nabla_{W_l} f_{\mu,A}(W,B)\,A_lA_l^\top
=
\underbrace{\mathbb{E}_R\big[\langle \nabla f(W,B),\Delta\rangle\,R_lA_l^\top\big]}_{(\ast)}
\;+\; \mathbb{E}_R\big[r_\mu\,R_lA_l^\top\big].
\]

\textit{Evaluate $(\ast)$.} Decompose layerwise:
\[
\langle \nabla f(W,B),\Delta\rangle=\sum_{i=1}^L \big\langle \nabla_{W_i} f(W,B),R_iA_i^\top\big\rangle
=\sum_{i=1}^L \big\langle \nabla_{W_i} f(W,B)A_i,\,R_i\big\rangle.
\]
Therefore
\[
(\ast)=\sum_{i=1}^L \mathbb{E}_R\big[\langle \nabla_{W_i} f(W,B)A_i,\,R_i\rangle\,R_l\big]A_l^\top.
\]
For $i\neq l$, independence and $\mathbb{E}[R_l]=0$ give zero. For $i=l$, apply
Lemma~\ref{lem:gauss-moment} with $M=\nabla_{W_l} f(W,B)A_l$ and $R=R_l$ to obtain
$\mathbb{E}[\langle M,R_l\rangle R_l]=M$, hence $(\ast)=\nabla_{W_l} f(W,B)A_lA_l^\top$.

\textit{Bound the remainder.} Using $\| \mathbb{E}[XY]\|_F\le \mathbb{E}[|X|\,\|Y\|_F]$ and Gaussian
moment finiteness,
\[
\Big\|\mathbb{E}_R\big[r_\mu\,R_lA_l^\top\big]\Big\|_F
\le
\frac{L}{2}\,\mu\,\mathbb{E}\big[\|\Delta\|_F^2\,\|R_l\|_F\big] \le c_L\,L\,\mu.
\]
for some absolute constants $c_l$. This yields \eqref{eq:proj-bias}. Averaging over $B$
proves the population statements.
\end{proof}

\subsubsection{Isotropic Gaussian smoothing identity for MEZO}\label{subsec:identity-iso}

\begin{theorem}]\label{lem:isotropic-identity}
Fix $W$ and a batch $B$. For each layer $l$, let $U_l\in\mathbb{R}^{d_{out}\times d_{in}}$ have i.i.d.\ $\mathcal N(0,1)$ entries, independently across $l$, and define the full-direction block $\Delta_l:=U_l$ and $\Delta:=\{\Delta_l\}_{l=1}^L$.
Define
\[
f_{\mu,\mathrm{iso}}(W,B):=\mathbb{E}_{U}\,f(W+\mu\Delta,B),\qquad
\phi(W,\Delta;B,B):=\frac{f(W+\mu\Delta,B)-f(W,B)}{\mu}.
\]
Assume $L$-smoothness of $f(\cdot,B)$. Then for each layer $l$:

\begin{enumerate}
\item[(a)] 
\begin{equation}\label{eq:iso-one-sided}
\nabla_{W_l} f_{\mu,\mathrm{iso}}(W,B)
\;=\;
\frac{1}{\mu}\,\mathbb{E}_{U}\big[f(W+\mu\Delta,B)\,U_l\big].
\end{equation}

\item[(b)] By $\mathbb E_U=0$,
\begin{equation}\label{eq:iso-two-point}
\nabla_{W_l} f_{\mu,\mathrm{iso}}(W,B)
\;=\;
\mathbb{E}_{U}\Big[\phi(W,\Delta;B,B)\,U_l\Big].
\end{equation}

\item[(c)] There exist absolute constants $c_l<\infty$
(depending only on Gaussian moments and layer shapes) such that
\begin{equation}\label{eq:iso-bias}
\big\|
\nabla_{W_l} f_{\mu,\mathrm{iso}}(W,B) - \nabla_{W_l} f(W,B)
\big\|_F
\;\le\;
c_L\,L\,\mu
\end{equation}
Averaging over $B$ yields the population versions with $f\to F$ on both sides.
\end{enumerate}
\end{theorem}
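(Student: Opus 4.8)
The plan is to treat Theorem~\ref{lem:isotropic-identity} as the $A_\ell = I_{d_{\mathrm{in}}}$ specialization of Theorem~\ref{thm:subspace-identity}: every step goes through essentially verbatim once the rank-$r$ direction $R_\ell A_\ell^\top$ is replaced by the full Gaussian block $U_\ell$ and the inner dimension $r$ by $d_{\mathrm{in}}$. Concretely, for part~(a) I would set $h(U):=f(W+\mu\Delta,B)$ with $\Delta_\ell=U_\ell$, and observe that varying only $U_\ell$ gives the differential $dh=\langle\nabla_{W_\ell}f(W+\mu\Delta,B),\,\mu\,dU_\ell\rangle$, hence $\nabla_{U_\ell}h(U)=\mu\,\nabla_{W_\ell}f(W+\mu\Delta,B)$. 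Since $f(\cdot,B)$ has an $L$-Lipschitz gradient, both $f$ and $\nabla f$ grow at most quadratically, so the sub-Gaussian growth condition~(A) of Lemma~\ref{lem:stein-matrix} holds (with any $\alpha\in(0,\tfrac12)$) and also licenses differentiating under the expectation. Applying Lemma~\ref{lem:stein-matrix} entrywise to $U_\ell$ yields $\mathbb{E}_{U_\ell}[h(U)\,U_\ell]=\mathbb{E}_{U_\ell}[\nabla_{U_\ell}h(U)]=\mu\,\mathbb{E}_{U_\ell}[\nabla_{W_\ell}f(W+\mu\Delta,B)]$; combining with $\nabla_{W_\ell}f_{\mu,\mathrm{iso}}(W,B)=\mathbb{E}_U[\nabla_{W_\ell}f(W+\mu\Delta,B)]$ and dividing by $\mu$ gives \eqref{eq:iso-one-sided}.

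For part~(b) I would simply note that $\mathbb{E}_U[U_\ell]=0$, so subtracting the identically-zero quantity $\tfrac{1}{\mu}f(W,B)\,\mathbb{E}_U[U_\ell]$ inside \eqref{eq:iso-one-sided} converts the one-point expression into the finite-difference form $\mathbb{E}_U[\phi(W,\Delta;B,B)\,U_\ell]$, which is \eqref{eq:iso-two-point}. For part~(c) I would invoke the $L$-smooth descent lemma with increment $h=\pm\mu\Delta$ to write $\phi(W,\Delta;B,B)=\langle\nabla f(W,B),\Delta\rangle+r_\mu$ with $|r_\mu|\le\tfrac{L}{2}\mu\|\Delta\|_F^2$, and substitute into \eqref{eq:iso-two-point}. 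The linear term decomposes layerwise as $\sum_i\langle\nabla_{W_i}f(W,B),U_i\rangle$; the cross terms with $i\ne\ell$ vanish by independence and $\mathbb{E}[U_\ell]=0$, while the $i=\ell$ term equals $\nabla_{W_\ell}f(W,B)$ by Lemma~\ref{lem:gauss-moment} with $M=\nabla_{W_\ell}f(W,B)$ and $R=U_\ell$. The remainder obeys $\|\mathbb{E}_U[r_\mu U_\ell]\|_F\le\tfrac{L}{2}\mu\,\mathbb{E}[\|\Delta\|_F^2\,\|U_\ell\|_F]\le c_L\,L\,\mu$, the Gaussian moment being finite (the constant now scaling with $\sum_i d_{\mathrm{out},i}d_{\mathrm{in},i}$ rather than the reduced rank), which is \eqref{eq:iso-bias}. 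Averaging all three displays over $B\sim\mathcal{D}$ upgrades them to the stated population versions with $f\to F$.

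Regarding difficulties: since this is essentially a corollary of the AGZO computation with the trivial basis, I do not expect any genuinely hard step. The only points needing care are the two measure-theoretic interchanges already used for AGZO — differentiating under $\mathbb{E}_U$ in $\nabla_W\mathbb{E}_U[f(W+\mu\Delta,B)]$, and verifying that Lemma~\ref{lem:stein-matrix}'s growth hypothesis~(A) is met — and both are covered by the at-most-quadratic growth implied by the $L$-Lipschitz-gradient assumption, which is dominated by $e^{\alpha z^2}$ for every $\alpha>0$. A secondary bookkeeping item is tracking how the remainder constant $c_L$ depends on the full layer dimensions rather than the target rank, but this is purely a matter of applying the same Cauchy–Schwarz and Gaussian-moment estimate to larger matrices.
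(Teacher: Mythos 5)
Your proof is correct and follows essentially the same route as the paper: part (a) via the matrix Stein identity (Lemma~\ref{lem:stein-matrix}) after computing $\nabla_{U_\ell}h = \mu\,\nabla_{W_\ell}f(W+\mu\Delta,B)$, part (b) by adding the zero-mean term, and part (c) via the $L$-smooth Taylor remainder together with the layerwise decomposition, independence, and Lemma~\ref{lem:gauss-moment}. Your framing of the theorem as the $A_\ell = I_{d_{\mathrm{in}}}$ specialization of Theorem~\ref{thm:subspace-identity} is an accurate and economical way to see why the argument carries over unchanged.
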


\begin{proof}
\textbf{(a)} By definition, we have:
\[
\nabla_{W_l} f_{\mu,\mathrm{iso}}(W,B)
=\mathbb{E}_{U}\big[\nabla_{W_l} f(W+\mu\Delta,B)\big].
\]

Let $h(U):=f(W+\mu\Delta,B)$ with $\Delta_l=U_l$. Varying $U_l$ only,
\[
dh=\big\langle \nabla_{W_l} f(W+\mu\Delta,B),\,\mu\,dU_l\big\rangle
=\mu\,\big\langle \nabla_{W_l} f(W+\mu\Delta,B),\,dU_l\big\rangle,
\]
so $\nabla_{U_l} h(U)=\mu\,\nabla_{W_l} f(W+\mu\Delta,B)$.

Applying Lemma~\ref{lem:stein-matrix} to $U_l$ gives
\[
\mathbb{E}_{U_l}\big[h(U)\,U_l\big]=\mathbb{E}_{U_l}\big[\nabla_{U_l} h(U)\big]
=\mu\,\mathbb{E}_{U_l}\big[\nabla_{W_l} f(W+\mu\Delta,B)\big].
\]
Taking expectation over all blocks $U$ and dividing by $\mu$ yields
\[
\frac{1}{\mu}\,\mathbb{E}_{U}\big[f(W+\mu\Delta,B)\,U_l\big]
=\mathbb{E}_{U}\big[\nabla_{W_l} f(W+\mu\Delta,B)\big]
=\nabla_{W_l} f_{\mu,\mathrm{iso}}(W,B),
\]
which is \eqref{eq:iso-one-sided}. 

\textbf{(b)} $\mathbb E_U=0$ implies
\[
\frac{1}{\mu}\,\mathbb{E}_{U}\big[f(W+\mu\Delta,B)\,U_l\big]
=\mathbb{E}_{U}\Big[\frac{f(W+\mu\Delta,B)-f(W,B)}{\mu}\,U_l\Big],
\]
giving \eqref{eq:iso-two-point}.

\textbf{(c)} By the second-order Taylor bounds , for $h=\pm\mu\Delta$,
\[
f(W{+}h,B)=f(W,B)+\langle \nabla f(W,B),h\rangle + R_h,
\quad
|R_h|\le
\frac{L}{2}\|h\|_F^2.
\]
Hence
\[
\phi(W,\Delta;B,B)=\langle \nabla f(W,B),\Delta\rangle + r_\mu,
\qquad
|r_\mu|\le
\frac{L}{2}\,\mu\,\|\Delta\|_F^2.
\]
Insert into \eqref{eq:iso-two-point}:
\[
\nabla_{W_l} f_{\mu,\mathrm{iso}}(W,B)
=
\underbrace{\mathbb{E}_{U}\big[\langle \nabla f(W,B),\Delta\rangle\,U_l\big]}_{(\ast)}
\;+\; \mathbb{E}_{U}\big[r_\mu\,U_l\big].
\]
For $(\ast)$, decompose layerwise:
\[
\langle \nabla f(W,B),\Delta\rangle
=\sum_{i=1}^L \langle \nabla_{W_i} f(W,B),U_i\rangle.
\]
Taking expectation, independence across blocks makes cross-terms vanish; for $i=l$, apply
Lemma~\ref{lem:gauss-moment} with $G=\nabla_{W_l} f(W,B)$ and $U=U_l$ to get
$\mathbb{E}[\langle G,U_l\rangle U_l]=G$. Thus $(\ast)=\nabla_{W_l} f(W,B)$.
Finally,
\[
\big\|\mathbb{E}_{U}[r_\mu\,U_l]\big\|_F
\le \mathbb{E}|r_\mu|\,\|U_l\|_F
\le
c_l\,L\,\mu.
\]
by finiteness of Gaussian moments, which proves \eqref{eq:iso-bias}. Averaging over $B$ gives the population statements.
\end{proof}

\subsubsection{Consequences and specializations}\label{subsec:consequences}

\paragraph{AGZO.}
Let $S_l^{(r)}=\mathrm{span}(U_l^{(r)})$ be the leading activation subspace and suppose $A_l$ is an orthonormal basis that (approximately) spans $S_l^{(r)}$. By Lemma~\ref{thm:subspace-identity},
\begin{equation}\label{eq:agzo-proj}
\nabla_{W_l} F_{\mu,A}(W)\,A_lA_l^\top
\;=\;
\big(\nabla_{W_l} F(W)\big)\,A_lA_l^\top
\;+\;
O(L\,\mu).
\end{equation}
In the ideal alignment case $S_l^{(r)}=\mathrm{col}(H_l)$, using
$\mathrm{row}(\nabla_{W_l}F)\subseteq \mathrm{col}(H_l)$ we have
$\nabla_{W_l}F(W)=\nabla_{W_l}F(W)\,\Pi_{S_l^{(r)}}$, so the only bias comes from smoothing.

\paragraph{MEZO.}
By Lemma~\ref{lem:isotropic-identity},
\begin{equation}\label{eq:mezo-bias}
\nabla_{W_l} F_{\mu,\mathrm{iso}}(W)
=
\nabla_{W_l} F(W)
\;+\;
O(L\,\mu)
\end{equation}
Hence MEZO estimates the full (isotropically smoothed) gradient, and is unbiased for $\nabla F(W)$ as $\mu\to 0$.


\subsection{Expected Cosine Similarity}\label{sec:cos_sim}
\subsubsection{Expected Cosine Similarity for AGZO}

For simplification, we denote the true gradient as $G\in\mathbb{R}^{d_{out}\times d_{in}}$ and the agzo approximated gradient as $\widehat G$. Let $A\in\mathbb{R}^{d_{in}\times r}$ have orthonormal columns ($A^\top A=I_r$) and let $R\in\mathbb{R}^{d_{out}\times r}$ have i.i.d.\ $\mathcal N(0,1)$ entries. In AGZO we perturb with
\[
\Delta \;=\; R A^\top .
\]

The AGZO estimator for layer $\ell$ can be written as
\begin{equation}
    \widehat{\nabla}^{\mathrm{AGZO}}_{W_\ell}(W;B)
    =
    \phi(W,\Delta(W,R);B)\,R_\ell A_\ell^\top,    
    \label{eq:agzo-estimator-theory}
\end{equation}
where
\begin{equation}\label{eq:finite_difference_phi}
    \phi(W,\Delta;B)
    =
    \frac{f(W + \mu\Delta,B) - f(W,B)}{\mu}.
\end{equation}

We assume the smoothing parameter tends to zero $\mu\to 0$, where the central difference in \eqref{eq:finite_difference_phi} equals the directional derivative:
\[
\phi \;=\; \langle G,\Delta\rangle \;=\; \langle G, R A^\top\rangle .
\]
The estimate is
\[
\widehat G_0 \;=\; \phi\, R A^\top .
\]
We use Frobenius inner product $\langle X,Y\rangle := \mathrm{tr}(X^\top Y)$ and Frobenius norm $\|X\|_F := \sqrt{\langle X,X\rangle}$. Our target is the expectation (over $R$ only)
\[
\mathbb{E}_R\!\left[\cos\!\big(\widehat G_0,\,G\big)\right], 
\qquad 
\cos(\widehat G_0,G):=\frac{\langle \widehat G_0,G\rangle}{\|\widehat G_0\|_F\,\|G\|_F}.
\]

\begin{theorem}[Copy of Theorem~\ref{thm:agzo-cosine}]
Let $\widehat{G}^{\mathrm{AGZO}}_0$ be the noiseless AGZO estimator constructed from $\Delta = R A^\top$ with $R\sim\mathcal{N}(0, I_{d_{out}\times r})$.
Then
\begin{equation}
    \mathbb{E}_R\big[
        \cos\big(\widehat{G}^{\mathrm{AGZO}}_0, G\big)
    \big]
    =
    \beta_{d_{out}r}\,
    \frac{\|G A\|_F}{\|G\|_F},
\end{equation}
where
\begin{equation}
    \beta_D
    :=
    \mathbb{E}\big[|U_1|\big],
    \quad
    U = (U_1,\dots,U_D)\sim\mathrm{Unif}(\mathbb{S}^{D-1}),
\end{equation}
depends only on the product dimension $d_{out}r$. Equivalently,
\begin{equation}
    \beta_D
    =
    \frac{\Gamma\!\left(\tfrac{D}{2}\right)}
         {\sqrt{\pi}\,\Gamma\!\left(\tfrac{D+1}{2}\right)},
\end{equation}
and for any $D \ge 2$, $\beta_D$ satisfies the tight bounds:
\begin{equation}
    \sqrt{\frac{2}{\pi D}}
    \;\le\;
    \beta_D
    \;\le\;
    \sqrt{\frac{2}{\pi(D-1)}}.
\end{equation}
\end{theorem}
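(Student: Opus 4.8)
The plan is to collapse the matrix-valued cosine into a single scalar, reduce that scalar to a one-dimensional integral over the sphere using rotational invariance of the Gaussian, and then read off the closed form and two-sided bounds for $\beta_D$ from classical Beta-function and Gamma-ratio facts. \textbf{Step 1 (collapse the cosine).} Since $\widehat{G}^{\mathrm{AGZO}}_0=\langle G,\Delta\rangle_F\,\Delta$ is a scalar multiple of $\Delta$, we have $\langle \widehat{G}^{\mathrm{AGZO}}_0,G\rangle_F=\langle G,\Delta\rangle_F^{2}$ and $\|\widehat{G}^{\mathrm{AGZO}}_0\|_F=|\langle G,\Delta\rangle_F|\,\|\Delta\|_F$, hence on the full-probability event $\langle G,\Delta\rangle_F\neq 0$,
\[
\cos\!\big(\widehat{G}^{\mathrm{AGZO}}_0,G\big)
=\frac{|\langle G,\Delta\rangle_F|}{\|\Delta\|_F\,\|G\|_F}.
\]
(If $GA=0$ the estimator is identically zero and $\|GA\|_F=0$, so \eqref{eq:agzo-cosine} holds trivially under the convention $\cos(0,G)=0$; assume $GA\neq 0$ henceforth.) Next, with $\Delta=RA^\top$ the cyclic trace identity gives $\langle G,RA^\top\rangle_F=\mathrm{tr}(A^\top G^\top R)=\langle GA,R\rangle_F$, and $A^\top A=I_r$ gives $\|RA^\top\|_F^2=\mathrm{tr}(R^\top R\,A^\top A)=\|R\|_F^2$. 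Writing $M:=GA\in\mathbb{R}^{d_{out}\times r}$, the claim reduces to showing $\mathbb{E}_R\big[\,|\langle M,R\rangle_F|/\|R\|_F\,\big]=\|M\|_F\,\beta_{d_{out}r}$.

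\textbf{Step 2 (spherical reduction).} Vectorize: with $m=\mathrm{vec}(M)$ and $z=\mathrm{vec}(R)\sim\mathcal N(0,I_D)$, $D=d_{out}r$, the quantity is $\mathbb{E}_z\big[\,|m^\top z|/\|z\|\,\big]$. Decompose $z=\|z\|\,\hat z$ with $\hat z:=z/\|z\|$; it is classical that $\hat z\sim\mathrm{Unif}(\mathbb{S}^{D-1})$ is independent of $\|z\|$, so $|m^\top z|/\|z\|=|m^\top\hat z|$. Choosing an orthogonal $O$ with $Om=\|m\|e_1$ and using $O\hat z\overset{d}{=}\hat z$, we get $m^\top\hat z\overset{d}{=}\|m\|\,\hat z_1$, hence $\mathbb{E}\big[|m^\top\hat z|\big]=\|m\|\,\mathbb{E}\big[|\hat z_1|\big]=\|M\|_F\,\beta_D$ with $\|M\|_F=\|GA\|_F$. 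Dividing by $\|G\|_F$ yields exactly \eqref{eq:agzo-cosine}.

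\textbf{Step 3 (closed form and bounds for $\beta_D$).} The first coordinate of a uniform point on $\mathbb{S}^{D-1}$ has density on $[-1,1]$ proportional to $(1-t^2)^{(D-3)/2}$, so $\beta_D=\mathbb{E}|\hat z_1|$ equals $\big(\int_{-1}^{1}|t|(1-t^2)^{(D-3)/2}\,dt\big)\big/\big(\int_{-1}^{1}(1-t^2)^{(D-3)/2}\,dt\big)$. The substitution $u=t^2$ gives numerator $\tfrac{2}{D-1}$; the denominator is the Beta integral $\mathrm{B}(\tfrac12,\tfrac{D-1}{2})=\sqrt{\pi}\,\Gamma(\tfrac{D-1}{2})/\Gamma(\tfrac{D}{2})$; and $\Gamma(\tfrac{D+1}{2})=\tfrac{D-1}{2}\Gamma(\tfrac{D-1}{2})$ then yields $\beta_D=\Gamma(D/2)/(\sqrt{\pi}\,\Gamma((D+1)/2))$. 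For the bounds, after squaring it suffices (with $x=D/2$) to show $\sqrt{x-\tfrac12}\le\Gamma(x+\tfrac12)/\Gamma(x)\le\sqrt{x}$. Log-convexity of $\Gamma$ at the midpoint of $\{x,x+1\}$ gives $\Gamma(x+\tfrac12)\le\sqrt{\Gamma(x)\Gamma(x+1)}=\sqrt{x}\,\Gamma(x)$, the upper bound; applying the same inequality at $x+\tfrac12$ gives $\Gamma(x+\tfrac12)/\Gamma(x)\ge x/\sqrt{x+\tfrac12}\ge\sqrt{x-\tfrac12}$, where the last step is $x^2\ge x^2-\tfrac14$. Tightness is immediate because both bounds equal $\sqrt{2/(\pi D)}\,(1+o(1))$ as $D\to\infty$.

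\textbf{Main obstacle.} No step is individually deep; the care concentrates in making the spherical reduction airtight — the independence of $\|z\|$ and $z/\|z\|$ for a standard Gaussian, the correct invocation of rotational invariance, and dispatching the degenerate case $GA=0$ — and in recalling the sharp Gamma-ratio inequality (a Gautschi/Wendel-type estimate, here obtained via log-convexity) needed for the stated two-sided bound, which is strictly finer than the naive Jensen bound $\beta_D\le\mathbb{E}[\hat z_1^2]^{1/2}=1/\sqrt{D}$.
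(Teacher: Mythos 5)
Your proof is correct and follows essentially the same route as the paper: collapse the cosine to $|\langle R, GA\rangle_F|/(\|R\|_F\|G\|_F)$ via the trace identity and $A^\top A = I_r$, vectorize, invoke rotational invariance to reduce to $\mathbb{E}|U_1|$ for a uniform point on $\mathbb{S}^{D-1}$, and evaluate the Beta-type integral. Two small points where you differ: (i) you explicitly dispatch the degenerate case $GA = 0$, which the paper's proof silently assumes away; (ii) for the two-sided bound on the Gamma ratio, the paper cites Gautschi's inequality for the lower bound and Wendel's inequality for the upper bound, whereas you derive both from log-convexity of $\Gamma$ applied at the midpoint of $\{x, x+1\}$ (once at $x$, once at $x + \tfrac12$, then the algebraic step $x^2 \ge x^2 - \tfrac14$). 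Your version is self-contained and avoids named inequalities at no extra length, which is a modest improvement in readability; the paper's version makes the lineage of the bounds explicit. Either way the constants and the final result are identical.
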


\begin{proof}\label{proof:agzo-cosine}
\textbf{Step 1 (numerator).}
Using $\langle X,Y\rangle=\mathrm{tr}(X^\top Y)$ and cyclicity of trace,
\[
\langle \widehat G_0,\,G\rangle
=\mathrm{tr}\!\big((\phi R A^\top)^\top G\big)
=\phi\,\mathrm{tr}\!\big(A R^\top G\big)
=\phi\,\mathrm{tr}\!\big(R^\top G A\big)
=\phi\,\langle R,\,GA\rangle .
\]
By definition of $\phi$,
\[
\phi \;=\; \langle G,RA^\top\rangle
=\mathrm{tr}\!\big(G^\top R A^\top\big)
=\mathrm{tr}\!\big(A^\top G^\top R\big)
=\langle R,GA\rangle.
\]
Hence
\[
\langle \widehat G_0,\,G\rangle \;=\; \langle R,GA\rangle^2 .
\]

\textbf{Step 2 (denominator).}
We have $\|\widehat G_0\|_F = |\phi|\,\|R A^\top\|_F$. Since $A^\top A=I_r$,
\[
\|R A^\top\|_F^2
= \mathrm{tr}\!\big((RA^\top)^\top (RA^\top)\big)
= \mathrm{tr}\!\big(A R^\top R A^\top\big)
= \mathrm{tr}\!\big(R^\top R A^\top A\big)
= \mathrm{tr}(R^\top R)
= \|R\|_F^2 .
\]
Therefore $\|R A^\top\|_F=\|R\|_F$ and
\[
\|\widehat G_0\|_F \;=\; |\phi|\,\|R\|_F \;=\; |\langle R,GA\rangle|\,\|R\|_F .
\]

\textbf{Step 3 (cosine for a fixed $R$).}
Combining the two steps,
\[
\cos\!\big(\widehat G_0,\,G\big)
=\frac{\langle \widehat G_0,\,G\rangle}{\|\widehat G_0\|_F\,\|G\|_F}
=\frac{\langle R,GA\rangle^2}{|\langle R,GA\rangle|\,\|R\|_F\,\|G\|_F}
=\frac{|\langle R,GA\rangle|}{\|R\|_F\,\|G\|_F}.
\]

\textbf{Step 4 (vectorization and rotational reduction).}
Let $r:=\mathrm{vec}(R)\in\mathbb{R}^{d}$ with $d=d_{out}r$ and note $r\sim\mathcal N(0,I_d)$; also set $k:=\mathrm{vec}(GA)$, so $\langle R,GA\rangle=r^\top k$ and $\|R\|_F=\|r\|_2$. Thus
\[
\cos\!\big(\widehat G_0,\,G\big)
=\frac{|r^\top k|}{\|r\|_2\,\|G\|_F}
=\frac{\|k\|_2}{\|G\|_F}\cdot \frac{|r^\top \hat k|}{\|r\|_2},
\qquad \hat k:=\frac{k}{\|k\|_2}.
\]
By rotational invariance of $r\sim\mathcal N(0,I_d)$, the distribution of $\frac{r}{\|r\|_2}$ is uniform on the unit sphere $\mathbb{S}^{d-1}$. Hence
\[
\mathbb{E}_R\!\left[\frac{|r^\top \hat k|}{\|r\|_2}\right]
=\mathbb{E}\big[\,|U_1|\,\big]=:\beta_{d_{out}r},
\]
where $U=(U_1,\dots,U_{mr})$ is uniform on $\mathbb{S}^{d_{out}r-1}$. Therefore
\[
\mathbb{E}_R\!\left[\cos\!\big(\widehat G_0,\,G\big)\right]
=\frac{\|k\|_2}{\|G\|_F}\,\beta_{d_{out}r}
=\beta_{d_{out}r}\,\frac{\|GA\|_F}{\|G\|_F}.
\]

\textbf{Step 5 (closed form for $\beta_{d_{out}r}$).}
The marginal density of $U_1$ is
\[
f_{d_{out}r}(t)=c_{d_{out}r}\,(1-t^2)^{\frac{d_{out}r-3}{2}},\quad t\in[-1,1], 
\qquad 
c_{d_{out}r}=\frac{\Gamma(\tfrac{d_{out}r}{2})}{\sqrt{\pi}\,\Gamma(\tfrac{d_{out}r-1}{2})}.
\]
Then
\[
\beta_{d_{out}r}=\mathbb{E}|U_1|
=2\!\int_0^1 t\,f_{d_{out}r}(t)\,dt
=2c_{d_{out}r}\!\int_0^1 t\,(1-t^2)^{\frac{d_{out}r-3}{2}}\,dt.
\]
With the substitution $u=t^2$ (so $du=2t\,dt$), we get
\[
\beta_{d_{out}r}
= c_{d_{out}r}\!\int_0^1 (1-u)^{\frac{d_{out}r-3}{2}}\,du
= c_{d_{out}r}\cdot \frac{2}{d_{out}r-1}
=\frac{\Gamma(\tfrac{d_{out}r}{2})}{\sqrt{\pi}\,\Gamma(\tfrac{d_{out}r+1}{2})}.
\]
For the bound of $\beta$, please see lemma~\ref{appendix:bound_beta}.
\end{proof}

\begin{remark}[Equivalent projector form]\label{remark:Equivalent_projector_form:}
Since $A^\top A=I_r$,
\[
\|GA\|_F^2=\mathrm{tr}\!\big(A^\top G^\top G A\big)
=\mathrm{tr}\!\big(G^\top G\,AA^\top\big)
=\|G\,AA^\top\|_F^2.
\]
Thus the main factor can also be written as $\|G AA^\top\|_F/\|G\|_F$, i.e.\ the fraction of gradient energy captured by the $r$-dimensional subspace spanned by the columns of $A$.
\end{remark}

\begin{lemma}\label{appendix:bound_beta}
For every integer $D\ge 2$, the sequence $\{\beta_D\}$ is strictly decreasing in $D$ and satisfies
\begin{equation}
    \sqrt{\frac{2}{\pi D}}
    \;\le\;
    \beta_D
    \;\le\;
    \sqrt{\frac{2}{\pi(D-1)}}.
\end{equation}
\end{lemma}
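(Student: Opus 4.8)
The plan is to reduce the lemma to a single exact product identity relating consecutive values of $\beta_D$, after which the two-sided bound is immediate. First I would record a recursion: using the closed form $\beta_D=\Gamma(D/2)/\bigl(\sqrt{\pi}\,\Gamma((D+1)/2)\bigr)$ established in Theorem~\ref{thm:agzo-cosine} together with the functional equation $\Gamma(z+1)=z\Gamma(z)$ at $z=D/2$, the $\Gamma((D+1)/2)$ factors cancel in the product $\beta_D\beta_{D+1}$ and $\Gamma((D+2)/2)=(D/2)\Gamma(D/2)$, yielding
\[
\beta_D\,\beta_{D+1}
=\frac{1}{\pi}\cdot\frac{\Gamma(D/2)}{\Gamma((D+2)/2)}
=\frac{2}{\pi D},
\qquad D\ge 1.
\]

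Second I would prove strict monotonicity $\beta_{D+1}<\beta_D$ for all $D\ge 1$. Since $\beta_{D+1}/\beta_D=\Gamma((D+1)/2)^2/\bigl(\Gamma(D/2)\,\Gamma((D+2)/2)\bigr)$ and $(D+1)/2$ is the midpoint of $D/2$ and $(D+2)/2$, strict log-convexity of $\Gamma$ on $(0,\infty)$ --- i.e.\ $(\log\Gamma)''(x)=\sum_{n\ge 0}(x+n)^{-2}>0$ --- gives $\Gamma((D+1)/2)^2<\Gamma(D/2)\,\Gamma((D+2)/2)$, hence $\beta_{D+1}<\beta_D$. A self-contained alternative uses the identification $\beta_D=\mathbb{E}\,|U_1|$ with $U\sim\mathrm{Unif}(\mathbb{S}^{D-1})$ from the proof of Theorem~\ref{thm:agzo-cosine}: writing $g\sim\mathcal{N}(0,I_{D+1})$, one has $|U_1^{(D+1)}|=\sqrt{1-(U_{D+1}^{(D+1)})^2}\,|V_1|$ where $V=(g_1,\dots,g_D)/\|(g_1,\dots,g_D)\|\sim\mathrm{Unif}(\mathbb{S}^{D-1})$, and taking expectations gives $\beta_{D+1}<\beta_D$ because $U_{D+1}^{(D+1)}\neq 0$ almost surely.

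Finally I would combine the two facts. Monotonicity gives $\beta_D^2>\beta_D\beta_{D+1}=2/(\pi D)$, so $\beta_D>\sqrt{2/(\pi D)}$; and for $D\ge 2$, applying the product identity with $D-1$ in place of $D$ together with $\beta_D<\beta_{D-1}$ gives $\beta_D^2<\beta_{D-1}\beta_D=2/(\pi(D-1))$, so $\beta_D<\sqrt{2/(\pi(D-1))}$. This establishes the displayed bounds (in fact with strict inequalities), and since both bounds are asymptotic to $\sqrt{2/(\pi D)}$ as $D\to\infty$, they are tight.

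I do not expect a genuine obstacle: the only thing to spot is the product identity $\beta_D\beta_{D+1}=2/(\pi D)$, which collapses the lemma into a one-line corollary of monotonicity. The only routine care needed is the bookkeeping of half-integer Gamma arguments; and if one prefers not to quote log-convexity of $\Gamma$, the probabilistic coupling above supplies a fully elementary monotonicity proof.
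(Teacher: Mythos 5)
Your proof is correct but takes a genuinely different route from the paper's. The paper cites two named Gamma-ratio inequalities --- Gautschi's inequality $\sqrt{u-\tfrac12}<\Gamma(u+\tfrac12)/\Gamma(u)$ and Wendel's inequality $\Gamma(u+\tfrac12)/\Gamma(u)\le\sqrt{u}$ --- and substitutes $u=D/2$ to obtain the two-sided bound directly. You instead derive the exact product identity $\beta_D\beta_{D+1}=2/(\pi D)$ from the functional equation $\Gamma(z+1)=z\Gamma(z)$, prove strict monotonicity $\beta_{D+1}<\beta_D$ (either from strict log-convexity of $\Gamma$ at the midpoint, or via the clean probabilistic coupling $|U_1^{(D+1)}|=\sqrt{1-(U_{D+1}^{(D+1)})^2}\,|V_1|$), and then read off $\beta_D^2>\beta_D\beta_{D+1}=2/(\pi D)$ and $\beta_D^2<\beta_{D-1}\beta_D=2/(\pi(D-1))$. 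What this buys you: the argument is essentially self-contained (the log-convexity proof needs only the trigamma series, and the probabilistic proof needs nothing beyond the sphere-uniform identification already used in Theorem~\ref{thm:agzo-cosine}); it yields strict inequalities; it handles monotonicity cleanly, whereas the paper's proof implies it only indirectly through the strict versions of its bounds; and the product identity makes the exact shape of the bounds transparent --- the lower and upper endpoints are literally $\beta_D\beta_{D+1}$ and $\beta_{D-1}\beta_D$. The paper's proof is shorter if one is willing to quote Gautschi and Wendel as black boxes. Both are valid.
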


\begin{proof}\label{proof:beta-bounds}
    By Gautschi's inequality~\citep{gautschi1959some} with $s=\tfrac12$ applied at $u-\tfrac12$
(so $u> \tfrac12$):
\[
\sqrt{\,u-\tfrac12\,}\;<\;\frac{\Gamma(u+\tfrac12)}{\Gamma(u)}\;.
\]
Wendel’s inequality~\citep{wendel1948note} states that for \(u>0\) and \(s\in(0,1)\),
\[
\frac{\Gamma(u+s)}{u^s\,\Gamma(u)} \;\le\; 1.
\]
Set \(s=\tfrac12\) and multiply by \(u^{1/2}\):
\[
\frac{\Gamma(u+\tfrac12)}{\Gamma(u)}\;\le\;\sqrt{u}\qquad(u>0).
\]
Combining:
\begin{equation*}
    \sqrt{u-\frac12}<\frac{\Gamma(u+\tfrac12)}{\Gamma(u)}<\sqrt u.
\end{equation*}
Substituting $u=\frac{D}{2}$ and multiplied by $\sqrt\pi$, we get:
\begin{equation*}
    \sqrt{\frac{\pi(D-1)}{2}}<\frac 1\beta_D<\sqrt{\frac{\pi D}{2}}
\end{equation*}
By reversing we complete the proof.
\end{proof}


\subsection{AGZO defeat MEZO in cosine similarity}

We compare the noiseless expectations from theorem~\ref{thm:agzo-cosine} and corollary~\ref{cor:mezo-cosine}:
\[
\mathbb{E}_R\!\left[\cos\!\big(\widehat G_0^{\text{AGZO}},G\big)\right]
=\beta_{d_{out}r}\,\frac{\|G P_A\|_F}{\|G\|_F}\!,
\qquad
\mathbb{E}_R\!\left[\cos\!\big(\widehat G_0^{\text{MEZO}},G\big)\right]
=\beta_{d_{out}d_{in}},
\]
where $A\in\mathbb{R}^{d_{in}\times r}$ is orthonormal (AGZO’s subspace), $P_A=AA^\top$, and
\[
\beta_D \;=\; \frac{\Gamma(\tfrac{D}{2})}{\sqrt{\pi}\,\Gamma(\tfrac{D+1}{2})}
\qquad(D\in\mathbb N).
\]
Define the \emph{energy–capture factor}
\[
\alpha(A;G)\;:=\;\frac{\|G P_A\|_F}{\|G\|_F}\in[0,1].
\]
Then
\begin{equation}\label{eq:AGZO>MEZO-threshold}
\mathbb{E}_R\!\left[\cos(\widehat G_0^{\text{AGZO}},G)\right]
\;>\;
\mathbb{E}_R\!\left[\cos(\widehat G_0^{\text{MEZO}},G)\right]
\quad\Longleftrightarrow\quad
\alpha(A;G) \;>\; \frac{\beta_{d_{out}d_{in}}}{\beta_{d_{out}r}}.
\end{equation}

The threshold is the \emph{exact} constant
\[
\frac{\beta_{d_{out}d_{in}}}{\beta_{d_{out}r}}
=\frac{\Gamma(\frac{d_{out}d_{in}}{2})}{\Gamma(\frac{d_{out}r}{2})}\cdot
\frac{\Gamma(\frac{d_{out}r+1}{2})}{\Gamma(\frac{d_{out}d_{in}+1}{2})}.
\]

By lemma~\ref{appendix:bound_beta}, we have
\begin{equation*}
    \frac{\beta_{d_{out}d_{in}}}{\beta_{d_{out}r}}< \frac{\sqrt{\frac{d_{out}r}2}}{\sqrt{\frac{d_{out}d_{in}}{2}-\frac{1}{2}}} = \frac{\sqrt{d_{out}r}}{\sqrt{d_{out}d_{in}-1}}.
\end{equation*}

Hence AGZO beats MEZO if, 
\[
\alpha(A;G)\;>\;\frac{\sqrt{d_{out}r}}{\sqrt{d_{out}d_{in}-1}}
\]
or (by taking square)
\begin{equation}\label{ineq:agzo_vs_mezo}
    \frac{\sum_{i=1}^{r} B_{ii}\,\sigma_i^2}{\sum_{i=1}^{s} B_{ii}\,\sigma_i^2} > \frac{r}{d_{in}-1/d_{out}} 
\end{equation}

We then have the following theorem to see \eqref{ineq:agzo_vs_mezo} is valid if $B$ is isotropic.
\begin{theorem}[Copy of Theorem~\ref{thm:agzo-vs-mezo}]
Consider a layer with gradient factorization
$\nabla_{W_\ell} F(W) = Q_\ell H_\ell^\top$ and compact SVD
$H_\ell = U_\ell \Sigma_\ell V_\ell^\top$ of rank $s_\ell < d_{in}$
with singular values $\{\sigma_{\ell,i}\}_{i=1}^{s_\ell}$.
Let $A_\ell = U_\ell^{(r)}$ be the AGZO subspace and
$B_\ell = V_\ell^\top Q_\ell^\top Q_\ell V_\ell$. If the average of its first $r$ diagonal entries is not less than the average of all diagonal entries, i.e., 
\begin{equation}
    \frac{1}{r}\sum_{i=1}^{r} B_{\ell,ii}
    \;\ge\;
    \frac{1}{s_\ell}\sum_{i=1}^{s_\ell} B_{\ell,ii},
\end{equation}
and that $H_\ell$ is low-rank, i.e., $s_\ell<d_{in}$.
Then
\begin{equation}
    \frac{\sum_{i=1}^{r} B_{\ell,ii}\,\sigma_{\ell,i}^2}
         {\sum_{i=1}^{s_\ell} B_{\ell,ii}\,\sigma_{\ell,i}^2}
    \;>\;
    \frac{r}{s_\ell}
    \;\ge\;
    \frac{r}{d_{in} - 1/d_{out}},
\end{equation}
and hence, in the noiseless single-query setting,
\begin{equation}
    \mathbb{E}_R\big[
        \cos\big(\widehat{G}^{\mathrm{AGZO}}_0, G_\ell\big)
    \big]
    \;>\;
    \mathbb{E}_R\big[
        \cos\big(\widehat{G}^{\mathrm{MEZO}}_0, G_\ell\big)
    \big].
\end{equation}
Moreover, when the singular values $\{\sigma_{\ell,i}\}$ are more heterogeneous (so that the leading directions carry more weighted energy), the gap in expected cosine similarity becomes larger.
\end{theorem}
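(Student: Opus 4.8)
The plan is to turn the cosine comparison into a scalar inequality about the weights $\{\sigma_{\ell,i}^2\}$ and masses $\{B_{\ell,ii}\}$, and then dispatch that inequality with a Chebyshev-type prefix estimate. By Theorem~\ref{thm:agzo-cosine} and Corollary~\ref{cor:mezo-cosine}, the desired inequality $\mathbb{E}_R[\cos(\widehat{G}^{\mathrm{AGZO}}_0,G_\ell)] > \mathbb{E}_R[\cos(\widehat{G}^{\mathrm{MEZO}}_0,G_\ell)]$ is equivalent to $\alpha := \|G_\ell A_\ell\|_F/\|G_\ell\|_F > \beta_{d_{out}d_{in}}/\beta_{d_{out}r}$ (this uses $G_\ell\ne 0$, so all cosines and ratios are well defined). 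Invoking the \emph{strict} two-sided bounds from Lemma~\ref{appendix:bound_beta}, namely $\beta_{d_{out}d_{in}} < \sqrt{2/(\pi(d_{out}d_{in}-1))}$ and $\beta_{d_{out}r} > \sqrt{2/(\pi d_{out}r)}$, the threshold is \emph{strictly} below $\sqrt{d_{out}r/(d_{out}d_{in}-1)} = \sqrt{r/(d_{in}-1/d_{out})}$. Hence it suffices to prove the \emph{non-strict} bound $\alpha^2 \ge r/(d_{in}-1/d_{out})$; the final strict separation is inherited entirely from the $\beta$-ratio estimate.

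First I would make $\alpha^2$ explicit. Using the compact SVD $H_\ell = U_\ell\Sigma_\ell V_\ell^\top$ we get $G_\ell = Q_\ell H_\ell^\top = Q_\ell V_\ell\Sigma_\ell U_\ell^\top$, and with $A_\ell = U_\ell^{(r)}$ (the leading $r$ columns of $U_\ell$) we obtain $G_\ell A_\ell = Q_\ell V_\ell^{(r)}\Sigma_\ell^{(r)}$. Since right-multiplication by a matrix with orthonormal columns preserves the Frobenius norm, $\|G_\ell\|_F^2 = \|Q_\ell V_\ell\Sigma_\ell\|_F^2 = \sum_{i=1}^{s_\ell}\sigma_{\ell,i}^2\,(V_\ell^\top Q_\ell^\top Q_\ell V_\ell)_{ii} = \sum_{i=1}^{s_\ell}\sigma_{\ell,i}^2 B_{\ell,ii}$, and identically $\|G_\ell A_\ell\|_F^2 = \sum_{i=1}^{r}\sigma_{\ell,i}^2 B_{\ell,ii}$. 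Therefore $\alpha^2 = \frac{\sum_{i=1}^{r}\sigma_{\ell,i}^2 B_{\ell,ii}}{\sum_{i=1}^{s_\ell}\sigma_{\ell,i}^2 B_{\ell,ii}}$, which is precisely the ratio appearing in the statement.

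The heart of the proof is then the claim $\alpha^2 \ge r/s_\ell$, equivalently $(s_\ell-r)\sum_{i=1}^{r}\sigma_{\ell,i}^2 B_{\ell,ii} \ge r\sum_{i=r+1}^{s_\ell}\sigma_{\ell,i}^2 B_{\ell,ii}$. The hypothesis $\frac{1}{r}\sum_{i=1}^{r}B_{\ell,ii} \ge \frac{1}{s_\ell}\sum_{i=1}^{s_\ell}B_{\ell,ii}$ rearranges to $(s_\ell-r)\sum_{i=1}^{r}B_{\ell,ii} \ge r\sum_{i=r+1}^{s_\ell}B_{\ell,ii}$. Because $\sigma_{\ell,1}\ge\cdots\ge\sigma_{\ell,s_\ell}>0$ and every $B_{\ell,ii}\ge 0$ (as $B_\ell\succeq 0$), I would chain $(s_\ell-r)\sum_{i=1}^{r}\sigma_{\ell,i}^2 B_{\ell,ii} \ge (s_\ell-r)\,\sigma_{\ell,r}^2\sum_{i=1}^{r}B_{\ell,ii} \ge r\,\sigma_{\ell,r}^2\sum_{i=r+1}^{s_\ell}B_{\ell,ii} \ge r\sum_{i=r+1}^{s_\ell}\sigma_{\ell,i}^2 B_{\ell,ii}$, where the first and last steps use monotonicity of $\sigma_{\ell,i}^2$ around index $r$ and the middle step is the hypothesis. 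Adding $r\sum_{i=1}^{r}\sigma_{\ell,i}^2 B_{\ell,ii}$ to both ends yields $\alpha^2\ge r/s_\ell$. Since $s_\ell<d_{in}$ (both integers) we have $s_\ell\le d_{in}-1\le d_{in}-1/d_{out}$, so $\alpha^2\ge r/s_\ell\ge r/(d_{in}-1/d_{out})$, and feeding this into the threshold reduction — where, crucially, the $\beta$-ratio inequality is strict — gives the strict cosine separation. For the closing sentence I would note that $\alpha^2=N/D$ with $N=\sum_{i\le r}\sigma_{\ell,i}^2 B_{\ell,ii}$, $D=\sum_{i\le s_\ell}\sigma_{\ell,i}^2 B_{\ell,ii}$, and $\partial\alpha^2/\partial(\sigma_{\ell,i}^2)=B_{\ell,ii}(D-N)/D^2\ge 0$ for $i\le r$ while $=-N B_{\ell,ii}/D^2\le 0$ for $i>r$; thus shifting spectral weight from a trailing to a leading index — the hallmark of faster singular-value decay across the rank-$r$ cutoff — can only increase $\alpha^2$, hence widening the gap to the fixed MeZO value $\beta_{d_{out}d_{in}}$.

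The main obstacle I anticipate is not any single computation but the bookkeeping around strictness and degeneracy. The Chebyshev chain only delivers $\alpha^2\ge r/s_\ell$, so the strict conclusion must be recovered from the strict Wendel/Gautschi bounds on $\beta_D$ together with $s_\ell<d_{in}$; the lone borderline case $d_{out}=1$ with $s_\ell=d_{in}-1$ needs either a strict version of the mass hypothesis or genuine heterogeneity of $\{\sigma_{\ell,i}\}$ to break the tie, which I would flag explicitly. Beyond that, one must record at the outset that $G_\ell\ne 0$ so that $D>0$ and the cosines are defined, and that $r\le s_\ell$ so that $A_\ell=U_\ell^{(r)}$ is meaningful; with these conventions the remaining steps are routine linear algebra.
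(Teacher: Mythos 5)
Your proof is correct and follows essentially the same route as the paper: a Chebyshev-type prefix inequality showing $(s_\ell - r)\sum_{i\le r}\sigma_{\ell,i}^2 B_{\ell,ii} \ge r\sum_{i>r}\sigma_{\ell,i}^2 B_{\ell,ii}$ by squeezing the singular values against $\sigma_{\ell,r}^2$, followed by the chain $\alpha^2 \ge r/s_\ell \ge r/(d_{\mathrm{in}}-1/d_{\mathrm{out}})$ and the $\beta$-ratio threshold from Lemma~\ref{appendix:bound_beta}. The paper's version of the Chebyshev step uses both $\sigma_{\ell,r}$ and $\sigma_{\ell,r+1}$ with an add-and-subtract decomposition of the quantity $D$, but this is only a cosmetic variant of your one-sided squeeze through $\sigma_{\ell,r}^2$. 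You actually improve on the paper in two respects. First, you derive the identity $\alpha^2 = \sum_{i\le r}\sigma_{\ell,i}^2 B_{\ell,ii}\big/\sum_{i\le s_\ell}\sigma_{\ell,i}^2 B_{\ell,ii}$ from the SVD of $H_\ell$, which the paper only asserts when passing from the $\alpha$ threshold to equation~\eqref{ineq:agzo_vs_mezo}. Second, you correctly notice that the Chebyshev step delivers only $\alpha^2 \ge r/s_\ell$, whereas the paper's proof slides from ``$D\ge 0$'' to the strict inequality printed in the theorem (this would require strictly decreasing singular values and nonvanishing leading mass, which are not stated hypotheses), and you correctly locate the source of the final strict cosine separation in the Gautschi/Wendel bound on $\beta_D$ rather than in the spectral argument. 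One remark on your closing caveat: the edge case $d_{\mathrm{out}}=1$, $s_\ell=d_{\mathrm{in}}-1$ does not in fact require a strengthened mass hypothesis or spectral heterogeneity; even when $\alpha^2 = r/s_\ell = r/(d_{\mathrm{in}}-1/d_{\mathrm{out}})$ holds with equality throughout, the strict upper bound $\beta_{d_{\mathrm{out}}d_{\mathrm{in}}} < \sqrt{2/\bigl(\pi(d_{\mathrm{out}}d_{\mathrm{in}}-1)\bigr)}$ alone already gives $\alpha > \beta_{d_{\mathrm{out}}d_{\mathrm{in}}}/\beta_{d_{\mathrm{out}}r}$, exactly as your own threshold reduction shows, so the stated conclusion needs no extra assumption in that case.
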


\begin{proof}\label{proof:agzo-vs-mezo}
Here we omit the subscript $\ell$ for simplicity. Define
\[
D \;:=\; \frac{1}{r}\sum_{i=1}^r B_{ii}\sigma_i^2 \;-\; \frac{1}{s}\sum_{i=1}^s B_{ii}\sigma_i^2
= \frac{s-r}{rs}\sum_{i=1}^r B_{ii}\sigma_i^2 \;-\; \frac{1}{s}\sum_{i=r+1}^s B_{ii}\sigma_i^2.
\]
Since $\sigma_1>\cdots>\sigma_s\ge 0$, we have for $i\le r$ that $\sigma_i^2\ge \sigma_r^2$, and for $r<i\le s$ that $\sigma_i^2\le \sigma_{r+1}^2$. Hence
\[
D \;\ge\; \frac{s-r}{rs}\,\sigma_r^2\sum_{i=1}^r B_{ii} \;-\; \frac{1}{s}\,\sigma_{r+1}^2\sum_{i=r+1}^s B_{ii}.
\]
Add and subtract $\frac{s-r}{rs}\,\sigma_{r+1}^2\sum_{i=1}^r B_{ii}$ to obtain
\[
\begin{aligned}
D
&\ge \frac{1}{rs}\!\left(s\sum_{i=1}^r B_{ii} - r\sum_{i=1}^s B_{ii}\right)\sigma_{r+1}^2
\;+\; \frac{s-r}{rs}\bigl(\sigma_r^2-\sigma_{r+1}^2\bigr)\sum_{i=1}^r B_{ii}.
\end{aligned}
\]
By the assumption,
\[
s\sum_{i=1}^r B_{ii} - r\sum_{i=1}^s B_{ii}
= rs\!\left(\frac{1}{r}\sum_{i=1}^r B_{ii} - \frac{1}{s}\sum_{i=1}^s B_{ii}\right)\;\ge\;0.
\]
Moreover, $\sigma_r^2-\sigma_{r+1}^2>0$ and $B_{ii}\ge 0$. Therefore both terms on the right-hand side are nonnegative, which yields $D\ge 0$, or equivalently,
\begin{equation*}
    \frac{\sum_{i=1}^{r} B_{ii}\,\sigma_i^2}{\sum_{i=1}^{s} B_{ii}\,\sigma_i^2} > \frac{r}{s} \geq \frac{r}{d_{in}-1/d_{out}}. 
\end{equation*}
The second inequality is due to the fact that $s< d_{in}$ and both $s,d_{in}\in \mathbb N^+$ . Combining with equation~\eqref{ineq:agzo_vs_mezo} and~\eqref{eq:AGZO>MEZO-threshold}, we can conclude that AGZO generally can get gradient estimation have larger cosine similarity with the true gradient than MEZO. 

\end{proof}

\section{Experimental Details}
\label{appendix_sec:exp}

\subsection{Datasets}
We evaluate our method on a diverse set of natural language understanding and question-answering benchmarks. The datasets used in our experiments include:

\begin{itemize}
    \item \textbf{SST-2}~\citep{socher-etal-2013-recursive}: A single-sentence classification task focusing on sentiment analysis of movie reviews.
    
    \item \textbf{DROP}~\citep{dua2019drop}: A reading comprehension benchmark that requires discrete reasoning (e.g., sorting, counting, arithmetic) over paragraphs to answer questions. This dataset challenges the model's ability to perform complex logical operations beyond simple span extraction.
    
    \item \textbf{SuperGLUE Benchmark Tasks}~\citep{wang2019superglue}: We select several challenging tasks requiring complex reasoning:
    \begin{itemize}
        \item \textbf{BoolQ}~\citep{clark2019boolq}: QA task with yes/no answers based on passages.
        \item \textbf{MultiRC}~\citep{khashabi2018looking}: QA task requiring reasoning over multiple sentences.
        \item \textbf{RTE}: Natural language inference task.
        \item \textbf{WiC}~\citep{pilehvar2019wic}: Word sense disambiguation task.
        \item \textbf{CB}~\citep{de2019commitmentbank}: Few-shot multi-class entailment task.
        \item \textbf{COPA}~\citep{roemmele2011choice}: Causal reasoning task.
    \end{itemize}

    \item \textbf{SQuAD}~\citep{rajpurkar2016squad}: A standard reading comprehension dataset based on Wikipedia articles.
\end{itemize}

\subsection{Implementation and Hyperparameters}
All zeroth-order optimization methods (AGZO, MeZO, and LOZO) are implemented using the same codebase to ensure a fair comparison. For all ZO experiments, we perform fine-tuning for a total of 20,000 steps. This fixed budget allows us to directly compare the convergence speed and final performance of different estimators under identical computational constraints. We set the smoothing parameter (perturbation scale) $\mu = 10^{-7}$ for all methods (AGZO, MeZO, and LOZO) on Qwen3 models and $\mu=10^{-4}$ on the Pangu model. This value was chosen to minimize discretization error while maintaining numerical stability.

For AGZO, we set the subspace rank $r_\ell = 1$ for all linear layers, a design choice driven by both signal quality and memory efficiency. As shown in our spectral analysis (Figure~\ref{fig:grad-act-structure}), the singular values of activation matrices decay rapidly; thus, a rank-1 setting concentrates the finite-difference perturbation along the single most dominant direction of the activation landscape. This maximizes the signal-to-noise ratio of the gradient estimate by avoiding the exploration of low-energy directions that contribute little to the true gradient. Furthermore, this rank-1 basis minimizes the memory overhead for storing the subspace information ($A_\ell$), ensuring the peak memory footprint remains nearly identical to that of standard inference.

\subsection{Additional Experiments on Pangu-1B}
\label{appendix_sec:pangu_additional}

In this section, we provide supplementary experimental details for the \texttt{openPangu-embedded-1B} model~\citep{chen2025pangu}. While the main text (Section~\ref{sec:experiments}) reports the standard GPU fine-tuning performance, here we focus on (1) specific implementation details regarding low-precision (BF16) optimization, (2) cross-platform verification on Ascend NPUs, and (3) memory footprint analysis specific to this architecture.

\paragraph{Precision and Hyperparameters (BF16).} 
Unlike the Qwen experiments where models are loaded in FP32, we conduct Pangu experiments using BF16 precision to simulate resource-constrained edge scenarios. Since FP32 has significantly higher mantissa precision than BF16, zeroth-order optimization requires a larger perturbation magnitude to overcome numerical noise. Accordingly, we adjust the perturbation parameter to $\mu = 1 \times 10^{-4}$ for Pangu (compared to $10^{-7}$ for FP32 models).

\paragraph{Peak Memory Footprint on Pangu-1B.}
We empirically validate the memory efficiency of AGZO on the Pangu architecture by measuring the peak GPU memory footprint during fine-tuning on the DROP task with Testbed One. 
As shown in Figure~\ref{fig:pangu-memory-a}, when fixing the batch size at 4 and scaling the sequence length, the First-Order (FO) baseline encounters Out-Of-Memory (OOM) errors at a sequence length of 1,536. In contrast, AGZO maintains a strictly linear scaling, consuming only 9.69 GB even at a context length of 2,048.
Similarly, with a fixed sequence length of 256 (Figure~\ref{fig:pangu-memory-b}), SGD triggers OOM at a batch size of 32, whereas AGZO remains operative up to a batch size of 64. This confirms that the memory advantages of AGZO observed on Qwen models (Section~\ref{sec:exp-memory}) generalize to different architectures, with the activation-guided subspace construction introducing negligible memory overhead.

\begin{figure*}[t]
  \centering
  \begin{subfigure}[t]{0.48\textwidth}
    \centering
    \includegraphics[height=3.6cm]{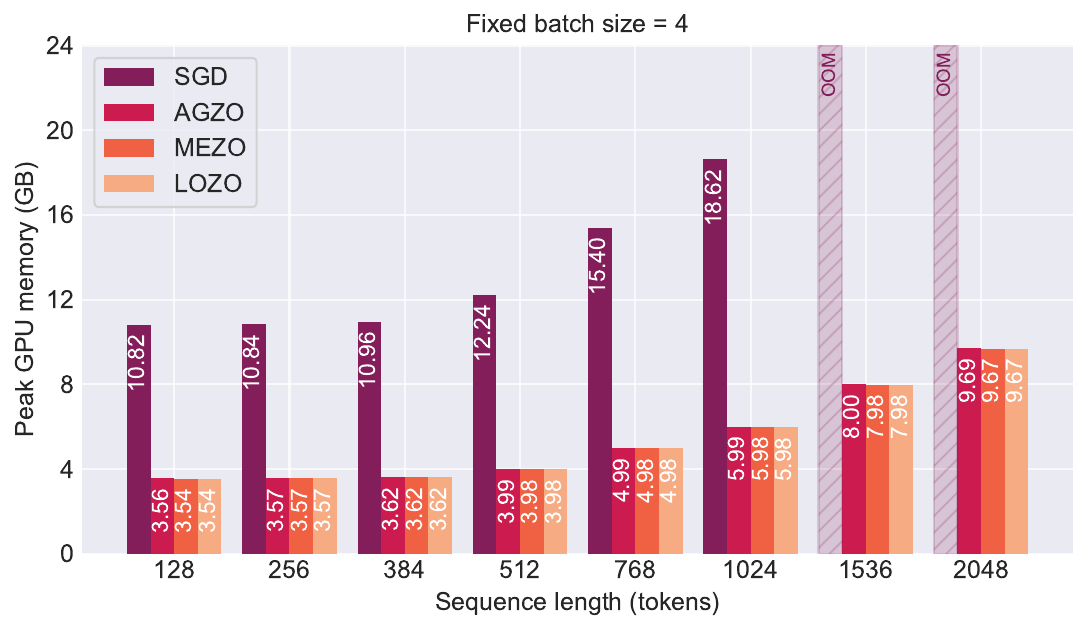}
    \caption{Fixed batch size $=4$, varying sequence length.}
    \label{fig:pangu-memory-a}
  \end{subfigure}
  \hfill
  \begin{subfigure}[t]{0.48\textwidth}
    \centering
    \includegraphics[height=3.6cm]{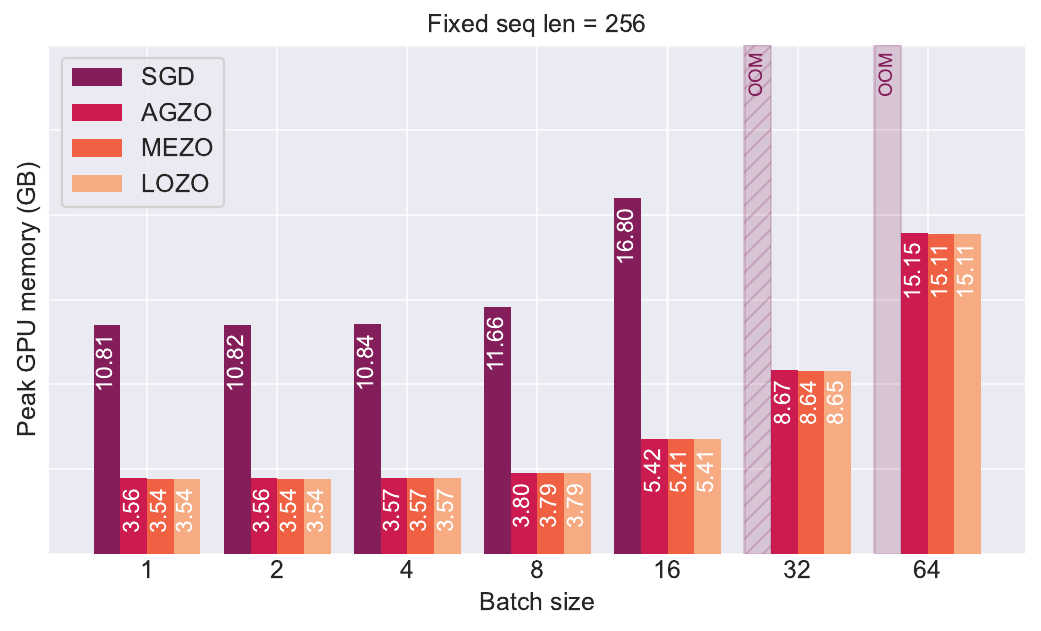}
    \caption{Fixed sequence length $=256$, varying batch size.}
    \label{fig:pangu-memory-b}
  \end{subfigure}
  \vspace{-2mm}
  \caption{Peak GPU memory usage when fine-tuning Pangu-1B on DROP.}
  \label{fig:pangu-memory}
\end{figure*}

\subsection{Additional Experiments on Qwen3-8B and OpenMathReasoning}
\label{app:qwen8b_openmath}

To further evaluate whether the gains of AGZO persist beyond the model scales and tasks reported in the main text, we conduct additional experiments on Qwen3-8B. 
We also include OpenMathReasoning as a reasoning-oriented benchmark to complement the classification and QA tasks considered in the main experiments.
The results are reported in Table~\ref{tab:qwen8b_extra}.

\begin{table*}[t]
\centering

\begin{minipage}[t]{0.47\textwidth}
\centering
\caption{Additional Qwen3-8B results.}
\label{tab:qwen8b_extra}
\resizebox{\linewidth}{!}{
\begin{tabular}{lcccc}
\toprule
Method & OpenMath & SST-2 & COPA & CB \\
\midrule
AGZO & \textbf{0.640} & \textbf{0.932} & \textbf{0.820} & \textbf{0.928} \\
MeZO & 0.560 & 0.916 & 0.800 & 0.910 \\
LOZO & 0.580 & 0.907 & 0.790 & 0.910 \\
\bottomrule
\end{tabular}
}
\end{minipage}
\hfill
\begin{minipage}[t]{0.49\textwidth}
\centering
\caption{Comparison with LoRA on Qwen3-0.6B.}
\label{tab:lora_comparison}
\resizebox{\linewidth}{!}{
\begin{tabular}{lccccc}
\toprule
Method & SST-2 & COPA & CB & BoolQ & MultiRC \\
\midrule
AGZO & \textbf{0.877} & 0.740 & \textbf{0.892} & \textbf{0.724} & 0.756 \\
LoRA & 0.834 & \textbf{0.740} & 0.817 & 0.684 & \textbf{0.771} \\
\bottomrule
\end{tabular}
}
\end{minipage}

\end{table*}

The results show that AGZO continues to improve over MeZO and LOZO on Qwen3-8B. 
The improvement on OpenMathReasoning suggests that the benefit of activation-guided perturbations is not limited to lightweight classification tasks. 
At the same time, these results should be interpreted as additional evidence at a larger model scale rather than an exhaustive evaluation of all large-model regimes.

These results provide additional evidence that the gains of AGZO are not specific to Qwen3 or Pangu models.
AGZO improves over both dense isotropic perturbations and data-independent low-rank perturbations on this additional model family.

\subsection{Comparison with LoRA}
\label{app:lora_comparison}

We additionally compare AGZO with LoRA on Qwen3-0.6B. LoRA and ZO fine-tuning represent two different memory-efficient adaptation paradigms. LoRA reduces the number of trainable parameters by introducing low-rank adapter modules and trains these adapters with backpropagation. In contrast, ZO fine-tuning avoids backpropagation and updates the original model parameters using only forward evaluations. Therefore, LoRA and AGZO should be viewed as parallel and potentially complementary approaches, rather than methods that directly replace each other.

Table~\ref{tab:lora_comparison} reports the comparison.
This experiment is intended to provide practical context, not to claim that AGZO universally dominates parameter-efficient fine-tuning methods.

AGZO is competitive with LoRA in this setting, obtaining stronger results on SST-2, CB, and BoolQ, matching LoRA on COPA, and underperforming LoRA on MultiRC. These results suggest that activation-guided ZO fine-tuning can provide a viable forward-only alternative in memory-constrained settings, while LoRA remains an important and complementary parameter-efficient fine-tuning approach.

\subsection{Runtime and Throughput}
\label{app:throughput}

AGZO introduces additional computation because it extracts activation-informed subspaces during the forward pass. This overhead mainly comes from power iteration and orthonormalization for the activation matrices. To quantify the practical cost, we report throughput in steps per second on Qwen3-0.6B in Table~\ref{tab:throughput}.

The results show that AGZO's throughput remains in the same practical regime as the other ZO baselines. Thus, AGZO should be understood as trading a moderate amount of additional computation for improved update quality while retaining the forward-only memory profile.

\begin{table*}[t]
\centering

\begin{minipage}[t]{0.56\textwidth}
\centering
\caption{Throughput comparison on Qwen3-0.6B, measured in steps per second.}
\label{tab:throughput}
\resizebox{\linewidth}{!}{
\begin{tabular}{lccccc}
\toprule
Method & SST-2 & COPA & MultiRC & SQuAD & WiC \\
\midrule
AGZO & 1.328 & 1.517 & 0.386 & 0.733 & 1.373 \\
LOZO & 1.488 & 2.054 & 0.505 & 0.783 & 1.386 \\
MeZO & 1.238 & 2.302 & 0.497 & 0.775 & 1.455 \\
\bottomrule
\end{tabular}
}
\end{minipage}
\hfill
\begin{minipage}[t]{0.40\textwidth}
\centering
\caption{Cosine-similarity diagnostic on GPT2-small / SST-2.}
\label{tab:power_iter_ablation}
\resizebox{\linewidth}{!}{
\begin{tabular}{lc}
\toprule
Method & Cosine similarity \\
\midrule
AGZO, exact SVD & 0.0124 \\
AGZO, power iteration $K=1$ & 0.0082 \\
AGZO, power iteration $K=3$ & 0.0123 \\
AGZO, power iteration $K=5$ & 0.0124 \\
MeZO & 0.0015 \\
LOZO & 0.0014 \\
\bottomrule
\end{tabular}
}
\end{minipage}

\end{table*}

\subsection{Rank Ablation}
\label{app:rank_ablation}

The main experiments use rank $r=1$ for all linear layers.
This choice may seem conservative because higher-rank activation subspaces can capture more projected gradient energy. However, actual ZO optimization does not directly use the projection of the true gradient. Instead, it samples a stochastic perturbation direction inside the chosen subspace. Increasing the rank enlarges the activation subspace and may capture more gradient energy, but it also increases the effective sampling dimension of the single-query estimator, which can weaken the instantaneous directional quality of a sampled perturbation.

To examine this trade-off, we conduct a rank ablation on Qwen3-0.6B fine-tuned on SST-2.
The results are reported in Table~\ref{tab:rank_ablation}.

\begin{table}[t]
\centering
\caption{Rank ablation on Qwen3-0.6B / SST-2.}
\label{tab:rank_ablation}
\begin{tabular}{lccc}
\toprule
Rank $r$ & 1 & 4 & 16 \\
\midrule
Test accuracy & \textbf{0.877} & 0.870 & 0.863 \\
\bottomrule
\end{tabular}
\end{table}

The results support $r=1$ as an effective operating point in our single-query setting. It concentrates exploration along the dominant activation direction while minimizing memory and computational overhead. Adaptive layer-wise rank selection remains an interesting direction for future work.

\subsection{Power-Iteration Ablation and Low-Rank Baseline Diagnostic}
\label{app:power_iter_ablation}

AGZO uses power iteration to approximate the leading activation subspace.
We evaluate the effect of the number of power-iteration steps $K$ and compare against exact SVD on GPT2-small / SST-2.
We also include MeZO and LOZO in the same diagnostic to distinguish activation-guided low-rank perturbations from data-independent low-rank perturbations.

The diagnostic shows that $K=3$ closely matches the exact-SVD version, while $K=1$ is less accurate.
The comparison with LOZO also indicates that the gain is not merely due to imposing a low-rank perturbation structure.
LOZO is low-rank but data-independent, whereas AGZO uses the activation-informed subspace.
This supports the interpretation that activation guidance is the key source of the improved directional fidelity.

\end{document}